\let\accentvec\vec            
\let\vec\accentvec
\newcommand{\citep}{\cite}
\newcommand{\citet}{\cite}
\newcommand{\Tau}{\mathcal{T}}
\newcommand{\LL}{\mathcal{L}}
\newcommand{\dquad}{d_\texttt{QUAD}}
\newcommand{\dber}{d_\texttt{BER}}
\newcommand{\OLOP}{\texttt{OLOP}\xspace}
\newcommand{\ODP}{\texttt{ODP}\xspace}
\newcommand{\KLOLOP}{\texttt{KL-OLOP}\xspace}
\DeclarePairedDelimiter{\ceil}{\lceil}{\rceil}
\DeclarePairedDelimiter{\floor}{\lfloor}{\rfloor}
\renewcommand{\epsilon}{\varepsilon}
\renewcommand{\tilde}{\widetilde}
\newcommand{\eqdef}{\buildrel \text{def}\over =}
\def\:#1{\protect \ifmmode {\mathbf{#1}} \else {\textbf{#1}} \fi}
\renewcommand{\epsilon}{\varepsilon}
\DeclareMathOperator*{\argmax}{arg\,max}
\newcommand{\probability}[1]{\mathbb{P}\left(#1\right)}
\DeclareMathOperator*{\expectedvalue}{\mathbb{E}}
\newcommand{\diff}[1]{\textcolor{ForestGreen}{#1}}
\begin{document}

\title{Practical Open-Loop Optimistic Planning}
%
%
\author{Edouard Leurent\inst{1,2}\and
Odalric-Ambrym Maillard\inst{1}}
\authorrunning{E. Leurent, O-A. Maillard}
%
\institute{SequeL team, INRIA Lille - Nord Europe, France\\
\email{\{edouard.leurent,odalric.maillard\}@inria.fr}\and
Renault Group, France\\
\email{edouard.leurent@renault.com}}
\maketitle              

\begin{abstract}
    We consider the problem of online planning in a Markov Decision Process when given only access to a generative model, restricted to open-loop policies - i.e. sequences of actions - and under budget constraint. In this setting, the \textit{Open-Loop Optimistic Planning} (\OLOP) algorithm enjoys good theoretical guarantees but is overly conservative in practice, as we show in numerical experiments. We propose a modified version of the algorithm with tighter upper-confidence bounds, \KLOLOP, that leads to better practical performances while retaining the sample complexity bound. Finally, we propose an efficient implementation that significantly improves the time complexity of both algorithms.
    
    \keywords{Planning \and Online learning \and Tree search.}
\end{abstract}

\section{Introduction}

In a \emph{Markov Decision Process} (MDP), an agent observes its current state $s$ from a state space $S$ and picks an action $a$ from an action space $A$, before transitioning to a next state $s'$ drawn from a transition kernel $\probability{s'|s,a}$ and receiving a bounded reward $r\in[0, 1]$ drawn from a reward kernel $\probability{r|s, a}$. The agent must act so as to optimise its expected cumulative discounted reward $\expectedvalue \sum_t \gamma^t r_t$, also called expected \emph{return}, where $\gamma\in[0,1)$ is the discount factor. In \emph{Online Planning} \cite{Munos2014}, we do not consider that these transition and reward kernels are known as in \emph{Dynamic Programming} \citep{Bellman1957}, but rather only assume access to the MDP through a \emph{generative model} (e.g. a simulator) which yields samples of the next state $s' \sim \probability{s'|s,a}$ and reward $r\sim\probability{r|s, a}$ when queried. Finally, we consider a \emph{fixed-budget} setting where the generative model can only be called a maximum number of times, called the budget $n$. 

\emph{Monte-Carlo Tree Search} (\texttt{MCTS}) algorithms were historically motivated by the application of computer Go, and made a first appearance in the CrazyStone software \citet{Coulom2006}. They were later reformulated in the setting of Multi-Armed Bandits by \citet{Kocsis2006} with their \emph{Upper Confidence bounds applied to Trees} (\texttt{UCT}) algorithm. Despite its popularity, \texttt{UCT} has been shown to suffer from several limitations: its sample complexity can be at least doubly-exponential for some problems (e.g. when a narrow optimal path is hidden in a suboptimal branch), which is much worse than uniform planning \citep{Coquelin2007}. The \texttt{Sparse Sampling} algorithm of \citet{Kearns2002} achieves better worst-case performance, but it is still non-polynomial and doesn't adapt to the structure of the MDP. In stark contrast, the \emph{Optimistic Planning for Deterministic systems} (\texttt{OPD}) algorithm considered by \citet{Hren2008} in the case of deterministic transitions and rewards exploits the structure of the cumulative discounted reward to achieve a problem-dependent polynomial bound on sample complexity. A similar line of work in a deterministic setting is that of \texttt{SOOP} and \texttt{OPC} by \cite{Busoniu2013,Busoniu2018} though they focus on continuous action spaces. \texttt{OPD} was later extended to stochastic systems with the \emph{Open-Loop Optimistic Planning} (\OLOP) algorithm introduced by \citet{Bubeck2010} in the open-loop setting: we only consider sequences of actions independently of the states that they lead to. This restriction in the space of policies causes a loss of optimality, but greatly simplifies the planning problem in the cases where the state space is large or infinite. More recent work such as \texttt{St0p} \citep{Szorenyi2014} and \texttt{TrailBlazer} \citep{Grill2016} focus on the probably approximately correct (PAC) framework: rather than simply recommending an action to maximise the expected rewards, they return an $\epsilon$-approximation of the value at the root that holds with high probability. This highly demanding framework puts a severe strain on these algorithms that were developed for theoretical analysis only and cannot be applied to real problems.

\paragraph{Contributions} The goal of this paper is to study the practical performances of \OLOP when applied to numerical problems. Indeed, \OLOP was introduced along with a theoretical sample complexity analysis but no experiment was carried-out. Our contribution is threefold:

\begin{itemize}
    \item First, we show that in our experiments \OLOP is overly pessimistic, especially in the low-budget regime, and we provide an intuitive explanation by casting light on an unintended effect that alters the behaviour of \OLOP.
    \item Second, we circumvent this issue by leveraging modern tools from the bandits literature to design and analyse a modified version with tighter upper-confidence bounds called \KLOLOP. We show that we retain the asymptotic regret bounds of $\OLOP$ while improving its performances by an order of magnitude in numerical experiments.
    \item Third, we provide a time and memory efficient implementation of \OLOP and \KLOLOP, bringing an exponential speedup that allows to scale these algorithms to high sample budgets.
\end{itemize}

The paper is structured as follows: in section \ref{sec:kl-olop}, we present \OLOP, give some intuition on its limitations, and introduce \KLOLOP, whose sample complexity is further analysed in section \ref{sec:sample-complexity}. In section \ref{sec:time-complexity}, we propose an efficient implementation of the two algorithms. Finally in section \ref{sec:experiments}, we evaluate them in several numerical experiments.

\subsubsection{Notations}
Throughout the paper, we follow the notations from \citep{Bubeck2010} and use the standard notations over alphabets: a finite word $a \in A^*$ of length $h$ represents a sequence of actions $(a_0, \cdots, a_h) \in A^h$. Its prefix of length $t \leq h$ is denoted $a_{1:t} = (a_0,\cdots,a_t) \in A^t$. $A^\infty$ denotes the set of infinite sequences of actions. Two finite sequences $a\in A^*$ and $b\in A^*$ can be concatenated as $ab\in A^*$, the set of finite and infinite suffixes of $a$ are respectively $a A^* = \{c\in\mathcal{A}^*: \exists b\in A^*$ such that $c=ab\}$ and $aA^\infty$ defined likewise, and the empty sequence is $\emptyset$.

During the planning process, the agent iteratively selects sequences of actions until it reaches the allowed budget of $n$ actions. More precisely, at time $t$ during the $m^{\text{th}}$ sequence, the agent played $a^m_{1:t} = a^m_1 \cdots a^m_t \in A^t$ and receives a reward $Y_t^m$. We denote the probability distribution of this reward as $\nu(a_{1:t}^m) = \probability{Y_t^m | s_t, a^m_t} \prod_{k=1}^{t-1} \probability{s_{k+1} | s_{k}, a_{k}^m}$, and its mean as $\mu(a_{1:t}^m)$, where $s_1$ is the current state.

After this exploration phase, the agent selects an action $a(n)$ so as to minimise the \emph{simple regret} $r_n = V - V(a(n))$, where $V=V(\emptyset)$ and $V(a)$ refers to the value of a sequence of actions $a\in A^h$, that is, the maximum expected discounted cumulative reward one may obtain after executing $a$:

\begin{equation}
\label{eq:value}
V(a) = \sup_{b\in aA^\infty} \sum_{t=1}^\infty \gamma^t\mu(b_{1:t}),
\end{equation}

\section{Kullback-Leibler Open-Loop Optimistic Planning}
\label{sec:kl-olop}

In this section we present \KLOLOP, a combination of the \OLOP algorithm of \citep{Bubeck2010} with the tighter Kullback-Leibler upper confidence bounds from \citep{Cappe2013}. We first frame both algorithms in a common structure before specifying their implementations.

\subsection{General structure}

First, following \OLOP, the total sample budget $n$ is split in $M$ trajectories of length $L$ in the following way: 
\begin{align*}
    &M\text{ is the largest integer such that } M \ceil{\log M/(2 \log 1/\gamma)} \leq n;\\
    &L= \ceil{\log M / (2 \log 1/\gamma)}.
\end{align*}
The look-ahead tree of depth $L$ is denoted $\Tau = \sum_{h=0}^L A^h$.

Then, we introduce some useful definitions. Consider episode $1 \leq m \leq M$. For any $1 \leq h \leq L$ and $a\in A^h$, let

\begin{equation*}
    T_a(m) \eqdef \sum_{s=1}^m \mathbbm{1}\{a^s_{1:h} = a\}
\end{equation*}

\noindent
be the number of times we played an action sequence starting with $a$, and $S_a(m)$ the sum of rewards collected at the last transition of the sequence $a$:

\begin{equation*}
    S_a(m) \eqdef \sum_{s=1}^m Y^s_h \mathbbm{1}\{a^s_{1:h} = a\}
\end{equation*}

\noindent
The empirical mean reward of $a$ is
$\quad\displaystyle{ \hat{\mu}_a(m) \eqdef \frac{S_a(m)}{T_a(m)}} \quad $
if $T_a(m) > 0$, and $+\infty$ otherwise. Here, we provide a more general form for upper and lower confidence bounds on these empirical means:
\begin{eqnarray}
\label{eq:u_mu_a_m}
    U^{\mu}_a(m) &\eqdef \max \left\{q\in I: T_a(m) d(\frac{S_a(m)}{T_a(m)}, q) \leq f(m) \right\}\\
    L^{\mu}_a(m) &\eqdef \min \left\{q\in I: T_a(m) d(\frac{S_a(m)}{T_a(m)}, q) \leq f(m) \right\}
\end{eqnarray}
where $I$ is an interval, $d$ is a divergence on $I\times I \rightarrow \mathbb{R^+}$ and $f$ is a non-decreasing function. They are left unspecified for now and their particular implementations and associated properties will be discussed in the following sections.

These upper-bounds $U^{\mu}_a$ for intermediate rewards finally enable us to define an upper bound $U_a$ for the value $V(a)$ of the entire sequence of actions $a$:

\begin{equation}
\label{eq:Ua}
    U_a(m) \eqdef \sum_{t=1}^h \gamma^t U^{\mu}_{a_{1:t}}(m) + \frac{\gamma^{h+1}}{1-\gamma}
\end{equation}

\noindent
where $\frac{\gamma^{h+1}}{1-\gamma}$ comes from upper-bounding by one every reward-to-go in the sum \eqref{eq:value}, for $t\geq h+1$. In \citep{Bubeck2010}, there is an extra step to "sharpen the bounds" of sequences $a \in A^L$ by taking:

\begin{equation}
\label{eq:Ba}
    B_a(m) \eqdef \inf_{1 \leq t \leq L} U_{a_{1:t}}(m)
\end{equation}

The general algorithm structure is shown in Algorithm \ref{algo:kl-olop}.
We now discuss two specific implementations that differ in their choice of divergence $d$ and non-decreasing function $f$. They are compared in Table \ref{tab:comparison}.

\begin{algorithm}[tp]
\DontPrintSemicolon
\For{each episode $m = 1, \cdots, M$}{
Compute $U_a(m-1)$ from \eqref{eq:Ua} for all $a\in\Tau$\;
Compute $B_a(m-1)$ from \eqref{eq:Ba} for all $a\in A^L$\;\label{alg:b_values_compute}
Sample a sequence with highest B-value: $a^m \in \argmax_{a\in A^L} B_a(m-1)$\;
}
\Return the most played sequence $a(n) \in \argmax_{a\in A^L} T_a(M)$
\caption{General structure for Open-Loop Optimistic Planning}
\label{algo:kl-olop}
\end{algorithm}

\begin{table}[tp]
    \caption{Different implementations of Algorithm \ref{algo:kl-olop} in \OLOP and \KLOLOP}
    \label{tab:comparison}
    \centering
    \begin{tabular}{ccc}
    \toprule
        Algorithm & \OLOP & \KLOLOP \\
        \midrule
        Interval $I$ & $\mathbb{R}$ & [0, 1] \\
        Divergence $d$ & $\dquad$ & $\dber$ \\
        $f(m)$ & $4 \log M$ & $2\log M + 2 \log\log M$\\
        \bottomrule
    \end{tabular}
\end{table}

\subsection{OLOP}
\label{sec:kl-olop-olop}
To recover the original \OLOP algorithm of \citet{Bubeck2010} from Algorithm \ref{algo:kl-olop}, we can use a quadratic divergence $\dquad$ on $I=\mathbb{R}$ and a constant function $f_4$ defined as follows:
\begin{equation*}
    \dquad(p,q) \eqdef 2(p-q)^2,\qquad
    f_4(m) \eqdef 4 \log M
\end{equation*}
Indeed, in this case $U^{\mu}_a(m)$ can then be explicitly computed as:
\begin{align*}
    U^{\mu}_a(m) &= \max \left\{q\in \mathbb{R}: 2(\frac{S_a(m)}{T_a(m)} - q)^2 \leq \frac{4 \log M }{T_a(m)} \right\} = \hat{\mu}_a(m) + \sqrt{\frac{2 \log M}{T_a(m)}}
\end{align*}
which is the Chernoff-Hoeffding bound used originally in section 3.1 of \cite{Bubeck2010}.

\subsection{An unintended behaviour}
\label{sec:kl-olop-behaviour}
From the definition of $U_a(m)$ as an upper-bound of the value of the sequence $a$, we expect increasing sequences $(a_{1:t})_t$ to have non-increasing upper-bounds. Indeed, every new action $a_t$ encountered along the sequence is a potential loss of optimality.
However, this property is only true if the upper-bound defined in \eqref{eq:u_mu_a_m} belongs to the reward interval $[0,1]$.

\begin{lemma}(Monotony of $U_a(m)$ along a sequence)
\label{lemma:seq_values}

\begin{itemize}
    \item If it holds that $U^{\mu}_b(m) \in [0, 1]$ for all $b\in A^*$, then for any $a\in A^L$ the sequence $(U_{a_{1:h}}(m))_{1\leq h \leq L}$ is non-increasing, and we simply have $B_a(m) = U_a(m)$.
    \item Conversely, if $U^{\mu}_b(m) > 1$ for all $b\in A^*$, then for any $a\in A^L$ the sequence $(U_{a_{1:h}}(m))_{1\leq h \leq L}$ is non-decreasing, and we have $B_a(m) = U_{a_{1:1}}(m)$.
\end{itemize}
\end{lemma}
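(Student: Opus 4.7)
The plan is to prove both bullets by directly computing the telescoping difference between consecutive terms of the sequence $(U_{a_{1:h}}(m))_h$, then reading off the sign from the hypothesis on $U^\mu$.

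First I would fix $a \in A^L$ and $1 \leq h \leq L-1$, and unfold the definition \eqref{eq:Ua} for the two consecutive prefixes $a_{1:h}$ and $a_{1:h+1}$. Almost all terms in the sum cancel, leaving
\begin{equation*}
U_{a_{1:h+1}}(m) - U_{a_{1:h}}(m) = \gamma^{h+1} U^{\mu}_{a_{1:h+1}}(m) + \frac{\gamma^{h+2}}{1-\gamma} - \frac{\gamma^{h+1}}{1-\gamma}.
\end{equation*}
A quick simplification of the reward-to-go tails $\frac{\gamma^{h+2} - \gamma^{h+1}}{1-\gamma} = -\gamma^{h+1}$ yields the key identity
\begin{equation*}
U_{a_{1:h+1}}(m) - U_{a_{1:h}}(m) = \gamma^{h+1}\bigl(U^{\mu}_{a_{1:h+1}}(m) - 1\bigr).
\end{equation*}

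From this identity both claims follow immediately. If $U^{\mu}_b(m) \in [0,1]$ for every $b\in A^*$, then the right-hand side is non-positive for each $h$, so $(U_{a_{1:h}}(m))_{1\leq h\leq L}$ is non-increasing; hence the infimum in \eqref{eq:Ba} is attained at $t=L$, giving $B_a(m) = U_{a_{1:L}}(m) = U_a(m)$. Conversely, if $U^{\mu}_b(m) > 1$ for every $b\in A^*$, the right-hand side is strictly positive, the sequence is non-decreasing, and the infimum in \eqref{eq:Ba} is attained at $t=1$, giving $B_a(m) = U_{a_{1:1}}(m)$.

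There is no real obstacle here: the only content is the algebraic identity above, which makes transparent the role of the value $1$ as the natural threshold coming from the geometric tail $\gamma^{h+1}/(1-\gamma)$ used to upper bound the reward-to-go. This is precisely what motivates, in the next section, restricting the upper confidence bounds to the interval $[0,1]$ as done in \KLOLOP.
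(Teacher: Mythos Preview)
Your proof is correct and follows essentially the same approach as the paper: compute the one-step difference $U_{a_{1:h+1}}(m) - U_{a_{1:h}}(m) = \gamma^{h+1}\bigl(U^{\mu}_{a_{1:h+1}}(m) - 1\bigr)$ and read off the sign from the hypothesis on $U^{\mu}$. The paper's proof is identical in substance, only slightly terser in that it explicitly writes out the first bullet and says the second follows by the same reasoning.
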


\begin{proof}
We prove the first proposition, and the same reasoning applies to the second. For $a\in A^L$ and $1 \leq h \leq L - 1$, we have by \eqref{eq:Ua}:

\begin{align*}
    U_{a_{1:h+1}}(m) - U_{a_{1:h}}(m) &= \gamma^{h+1}U^{\mu}_{a_{1:h+1}}(m) + \frac{\gamma^{h+2}}{1-\gamma} - \frac{\gamma^{h+1}}{1-\gamma}\\
    &= \gamma^{h+1}(\underbrace{U^{\mu}_{a_{1:h+1}}(m)}_{\in [0, 1]} - 1) \leq 0
\end{align*}

\noindent
We can conclude that $(U_{a_{1:h}}(m))_{1\leq h \leq L}$ is non-increasing and that $B_a(m) = \inf_{1 \leq h \leq L} U_{a_{1:h}}(m) = U_{a_{1:L}}(m) = U_a(m)$.
\qed
\end{proof}

Yet, the Chernoff-Hoeffding bounds used in \OLOP start in the $U^{\mu}_a(m) > 1$ regime -- initially $U^{\mu}_a(m) = \infty$ -- and can remain in this regime for a long time especially in the near-optimal branches where $\hat{\mu}_a(m)$ is close to one.

Under these circumstances, the Lemma \ref{lemma:seq_values} has a drastic effect on the search behaviour. Indeed, as long as a subtree under the root verifies $U^{\mu}_a(m) > 1$ for every sequence $a$, then all these sequences share the same B-value $B_a(m) = U_{a_{1:1}(m)}$. This means that \OLOP cannot differentiate them and exploit information from their shared history as intended, and behaves as uniform sampling instead.
Once the early depths have been explored sufficiently, \OLOP resumes its intended behaviour, but the problem is only shifted to deeper unexplored subtrees.
 
 This consideration motivates us to leverage the recent developments in the Multi-Armed Bandits literature, and modify the upper-confidence bounds for the expected rewards $U^\mu_a(m)$ so that they respect the reward bounds.

\subsection{KL-OLOP}
\label{sec:kl-olop-kl-olop}

\noindent
We propose a novel implementation of Algorithm \ref{algo:kl-olop} where we leverage the analysis of the kl-UCB algorithm from \citep{Cappe2013} for multi-armed bandits with general bounded rewards.
Likewise, we use the Bernoulli Kullback-Leibler divergence defined on the interval $I=[0,1]$ by:

\begin{equation*}
    \dber(p, q) \eqdef p \log \frac{p}{q} + (1-p)\log\frac{1-p}{1-q}
\end{equation*}

\noindent
with, by convention, $0 \log 0 = 0 \log 0/0 = 0$ and $x \log x /0 = +\infty$ for $x>0$.
This divergence and the corresponding bounds are illustrated in Figure \ref{fig:ukl}.

\begin{figure}[tp]
    \centering
    \includegraphics[width=0.6\textwidth]{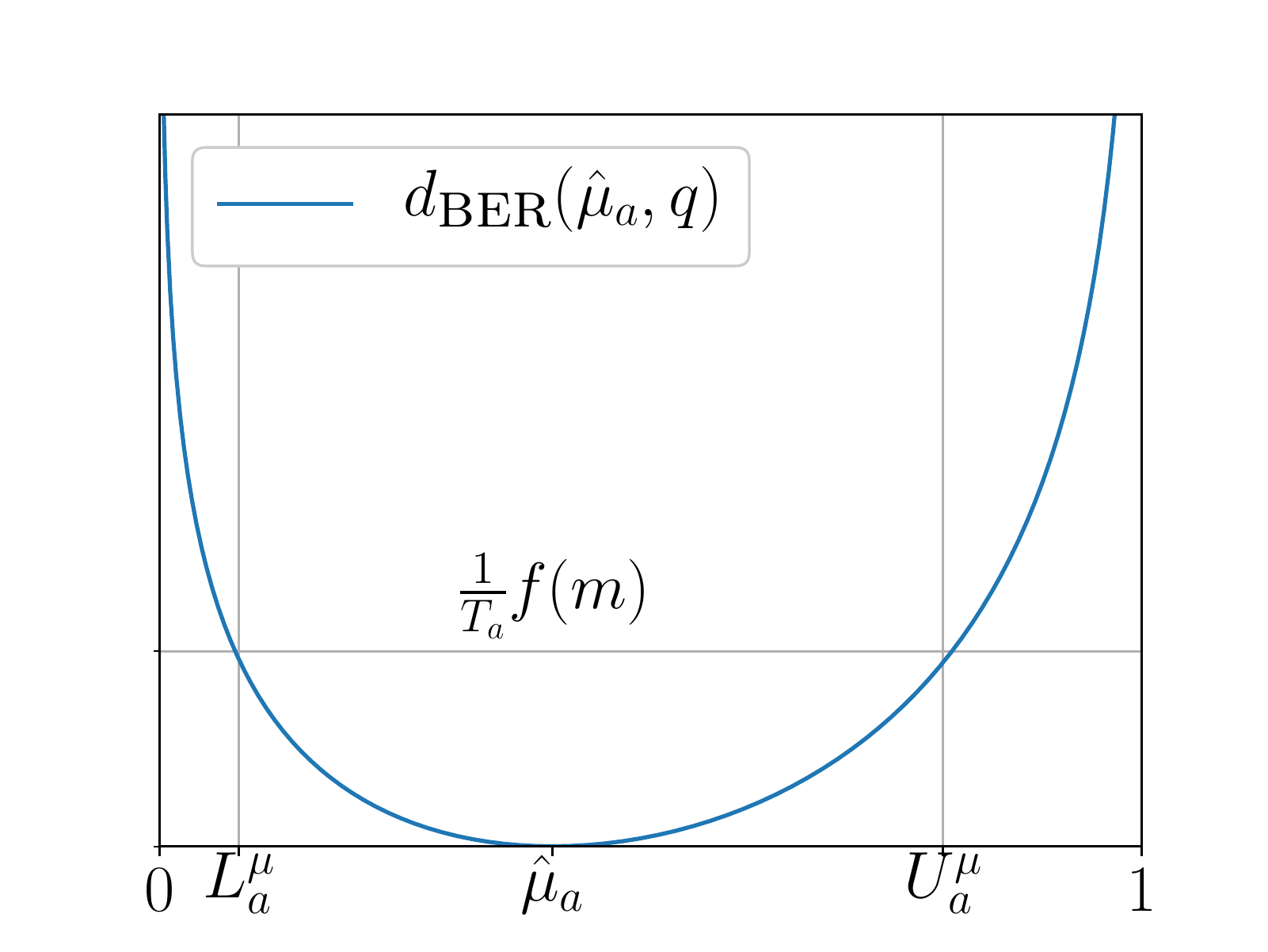}
    \caption{The Bernoulli Kullback-Leibler divergence $\dber$, and the corresponding upper and lower confidence bounds $U^{\mu}_a$ and $L^{\mu}_a$ for the empirical average $\hat{\mu_a}$. Lower values of $f(m)$ give tighter confidence bounds that hold with lower probabilities.}
    \label{fig:ukl}
\end{figure}

$U^{\mu}_a(m)$ and $L^{\mu}_a(m)$ can be efficiently computed using Newton iterations, as for any $p\in[0, 1]$ the function $q \rightarrow \dber(p,q)$ is strictly convex and increasing (resp. decreasing) on the interval [p, 1] (resp. [0, p]).

Moreover, we use the constant function $f_2: m \rightarrow 2 \log M + 2 \log\log M$. This choice is justified in the end of section \ref{sec:regret-proof}. Because $f_2$ is lower than $f_4$, the Figure \ref{fig:ukl} shows that the bounds are tighter and hence less conservative than that of \OLOP, which should increase the performance, provided that their associated probability of violation does not invalidate the regret bound of \OLOP.

\begin{remark}[Upper bounds sharpening]
\label{rmk:sharpen}
The introduction of the B-values $B_a(m)$ was made necessary in \OLOP by the use of Chernoff-Hoeffding confidence bounds which are not guaranteed to belong to [0, 1]. On the contrary, we have in \KLOLOP that $U^\mu_a(m) \in I = [0,1]$ by construction. By Lemma \ref{lemma:seq_values}, the upper bounds sharpening step in line \ref{alg:b_values_compute} of Algorithm \ref{algo:kl-olop} is now superfluous as we trivially have $B_a(m) = U_a(m)$ for all $a\in A^L$.
\end{remark}

\section{Sample complexity}
\label{sec:sample-complexity}

We say that $u_n = \tilde{O}(v_n)$ if there exist $\alpha, \beta >0$ such that $u_n \leq \alpha \log(v_n)^\beta v_n$.
Let us denote the proportion of near-optimal nodes $\kappa_2$ as:

\begin{equation*}
    \label{eq:kappa}
    \kappa_2 \eqdef \limsup_{h\rightarrow\infty}{\left|\left\{a\in a^H:V(a) \geq V - 2\frac{\gamma^{h+1}}{1-\gamma}\right\}\right|^{1/h}}
\end{equation*}

\begin{theorem}[Sample complexity]
\label{thm:regret}
We show that \KLOLOP enjoys the same asymptotic regret bounds as \OLOP. More precisely, for any $\kappa' > \kappa_2$, \KLOLOP satisfies:

\begin{equation*}
    \expectedvalue r_n = \begin{cases}
      \tilde{0}\left(n^{-\frac{\log 1/\gamma}{\log \kappa'}}\right), & \text{if}\ \gamma\sqrt{\kappa'} > 1 \\
      \tilde{0}\left(n^{-\frac{1}{2}}\right), & \text{if}\ \gamma\sqrt{\kappa'} \leq 1
    \end{cases}
\end{equation*}
\end{theorem}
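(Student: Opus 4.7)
The plan is to reproduce the regret analysis of \OLOP from \citep{Bubeck2010} almost verbatim, changing only the concentration step that underlies it. Define the good event
\begin{equation*}
\mathcal{E} \eqdef \left\{\forall m \leq M,\ \forall 1\leq h\leq L,\ \forall a\in A^h:\ \mu(a)\in[L^\mu_a(m), U^\mu_a(m)]\right\}.
\end{equation*}
In \OLOP, $\probability{\mathcal{E}^c}$ is bounded via Hoeffding's inequality with $f_4(m)=4\log M$. To obtain the analogous control for \KLOLOP, the first step is to invoke the self-normalised Bernoulli-KL deviation inequality underlying the kl-UCB analysis of \citep{Cappe2013}: for any $a$ and any $m\leq M$,
\begin{equation*}
\probability{T_a(m)\,\dber(\hat{\mu}_a(m),\mu(a)) \geq f_2(m)} \leq C(\log M)\,e^{-f_2(m)},
\end{equation*}
for a universal constant $C$. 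With $f_2(m)=2\log M+2\log\log M$, this exponent dominates the spurious $\log M$ prefactor, and a union bound over sequences in $\Tau$ and episodes $m\leq M$ then controls $\probability{\mathcal{E}^c}$ at the same order as in \OLOP, up to polylogarithmic factors that the $\tilde{O}$ notation absorbs.

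The second step is to transfer the deterministic part of the \OLOP analysis to this setting. Pinsker's inequality $\dber(p,q)\geq 2(p-q)^2 = \dquad(p,q)$ implies that on $\mathcal{E}$,
\begin{equation*}
\mu(a)\leq U^\mu_a(m) \leq \hat{\mu}_a(m) + \sqrt{\frac{f_2(m)}{2T_a(m)}},
\end{equation*}
so the \KLOLOP confidence bounds are sandwiched between the true means and the Hoeffding-type upper bounds used in \OLOP, with threshold $f_2$ in place of $4\log M$. Combined with Remark \ref{rmk:sharpen}, which guarantees $B_a(m)=U_a(m)$ since $U^\mu_a(m)\in[0,1]$ by construction, this supplies exactly the two inputs required by the \OLOP regret decomposition: an optimistic bound $U_a(m)\geq V(a)$ for every sequence, and a gap-to-width inequality for every suboptimal one.

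The third step is therefore to plug these ingredients into the \OLOP argument unchanged. On $\mathcal{E}$, the gap-to-width inequality bounds the number of pulls of a suboptimal sequence by a quantity of order $f_2(M)/\Delta_a^2$, the combinatorial count of near-optimal sequences at depth $h$ grows at most like $(\kappa')^{h}$ for any $\kappa'>\kappa_2$ by definition of $\kappa_2$, and substituting $L=\lceil\log M/(2\log 1/\gamma)\rceil$ together with $n\asymp ML$ yields the two-regime bound on $\expectedvalue r_n$ claimed in the theorem. Off $\mathcal{E}$, the regret is trivially bounded by $1/(1-\gamma)$, contributing only a negligible $\tilde{O}(M^{-1})$ term.

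I expect the main obstacle to be the bookkeeping in the concentration step: the kl-UCB deviation bound carries an extra $\log M$ factor that forces the choice $f_2(m)=2\log M+2\log\log M$ rather than the more natural $2\log M$, and one has to verify that the resulting slightly wider confidence intervals still fit into \OLOP's regret decomposition without worsening the asymptotic rate (only the hidden logarithmic factors in $\tilde{O}$). The combinatorial $\kappa_2$-counting part of the argument is structurally unaffected by the switch from $\dquad$ to $\dber$, so no fundamentally new idea is needed there.
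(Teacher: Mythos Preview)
Your overall architecture---replace Hoeffding by the kl-UCB self-normalised deviation inequality of \citep{Cappe2013}, control confidence-interval widths via Pinsker, and then reuse the \OLOP combinatorics---is exactly the paper's. The Pinsker step and the use of Remark~\ref{rmk:sharpen} correspond precisely to Lemma~\ref{lemma:ci-length} and Lemma~\ref{lemma:seq_values}. What does not go through is the route via a single global good event $\mathcal{E}$.

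The problem is the union bound in your first step. You assert that $\probability{\mathcal{E}^c}=\tilde O(M^{-1})$ after union-bounding over all $a\in\Tau$ and all episodes. But $|\Tau|=\Theta(K^L)$ with $L=\lceil\log M/(2\log 1/\gamma)\rceil$, so $K^L = M^{\log K/(2\log 1/\gamma)}$, whose exponent can exceed any fixed constant. With $f_2(m)=2\log M+2\log\log M$ the per-node failure probability is only $\tilde O(M^{-2})$, hence $\probability{\mathcal{E}^c}=\tilde O(K^L M^{-2})$, which diverges whenever $K\gamma^2>1$ and is in general nowhere near $\tilde O(M^{-1})$. The original \OLOP analysis does \emph{not} control such a global event either, so you cannot appeal to it for ``the same order''. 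Instead, both \citep{Bubeck2010} and the paper apply the deviation inequality \emph{locally}: once along the optimal path $a^*_{1:t}$, $t\leq L$, and once along the specific suboptimal prefix $a_{1:t}$, $t\leq h$, yielding the per-episode bound $O((L+h)\delta_m)$ of Lemma~\ref{lemma:boundary-crossing-prob}. These failure probabilities are then summed only over $a\in\mathcal{J}_h$---a set of size $O((\kappa')^h)$, not $K^h$---inside the recursive counting of Lemmas~\ref{lemma:size_Ph}--\ref{lemma:expected-plays-count}. This is exactly where $\kappa'$ replaces $K$ in the final rate, and a global good-event split throws that saving away.

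So the fix is structural rather than cosmetic: drop $\mathcal{E}$, keep your per-node concentration and Pinsker estimates, and thread them through the \OLOP pyramidal decomposition (Lemma~\ref{lemma:sub-optimal-pull} onward) so that concentration failures are charged only to near-optimal sequences.
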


\section{Time and memory complexity}
\label{sec:time-complexity}

After having considered the sample efficiency of \OLOP and \KLOLOP, we now turn to study their time and memory complexities. We will only mention the case of \KLOLOP for ease of presentation, but all results easily extend to \OLOP.

The Algorithm \ref{algo:kl-olop} requires, at each episode, to compute and store in memory of the reward upper-bounds and U-values of all nodes in the tree $\Tau = \sum_{h=0}^L A^h$.
Hence, its time and memory complexities are 
\begin{equation}
    C(\KLOLOP) = O(M|\Tau|) = O(MK^L).
\end{equation}

The curse of dimensionality brought by the branching factor $K$ and horizon $L$ makes it intractable in practice to actually run \KLOLOP in its original form even for small problems. However, most of this computation and memory usage is wasted, as with reasonable sample budgets $n$ the vast majority of the tree $\Tau$ will not be actually explored and hence does not hold any valuable information.

We propose in Algorithm \ref{algo:lazy-kl-olop} a lazy version of \KLOLOP which only stores and processes the explored subtree, as shown in Figure \ref{fig:tree}, while preserving the inner workings of the original algorithm.

\begin{figure}[ht]
    \centering
    \includegraphics[width=0.6\textwidth]{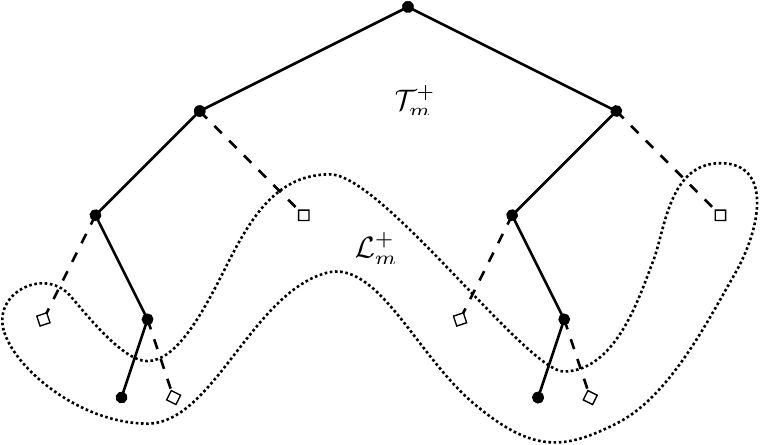}
    \caption{A representation of the tree $\Tau_m^+$, with $K = 2$ actions and after episode $m = 2$, when two sequences have been sampled. They are represented with solid lines and dots \textbullet, and they constitute the explored subtree $\Tau_m$. When extending $\Tau_m$ with the missing children of each node, represented with dashed lines and diamonds $\diamond$, we obtain the full extended subtree $\Tau_m^+$. The set of its leaves is denoted $\LL_m^+$ and shown as a dotted set.}
    \label{fig:tree}
\end{figure}

\begin{algorithm}[tp]
\DontPrintSemicolon
Let $M$ be the largest integer such that $M \log M/(2 \log 1/\gamma) \leq n$\;
Let $L = \log M / (2 \log 1/\gamma)$\;
Let $\Tau_0^+ = \LL_0^+ = \{\emptyset\}$\;
\For{each episode $m = 1, \cdots, M$}{
Compute $U_a(m-1)$ from \eqref{eq:Ua} for all $a\in\Tau_{m-1}^+$\;
Compute $B_a(m-1)$ from \eqref{eq:Ba} for all $a\in \LL_{m-1}^+$\;
Sample a sequence with highest B-value: $a \in \argmax_{a\in \LL_{m-1}^+} B_a(m-1)$\;
Choose an arbitrary continuation $a^m \in aA^{L-|a|}$\tcp*{e.g. uniformly}
Let $\Tau_m^+ = \Tau_{m-1}^+$ and $\LL_m^+ = \LL_{m-1}^+$\;
\For{$t=1, \cdots, L$}{
    \If{$a^m_{1:t} \not \in \Tau_{m}^+$}{
    Add $a^m_{1:t-1}A$ to $\Tau_{m}^+$ and $\LL_{m}^+$\;
    Remove $a^m_{1:t-1}$ from $\LL_{m}^+$
    }
}
}
\Return the most played sequence $a(n) \in \argmax_{a\in \LL_m^+} T_a(M)$
\caption{Lazy Open Loop Optimistic Planning}
\label{algo:lazy-kl-olop}
\end{algorithm}

\begin{theorem}[Consistency]
\label{thm:consistency}
The set of sequences returned by Algorithm \ref{algo:lazy-kl-olop} is the same as the one returned by Algorithm \ref{algo:kl-olop}.
In particular, Algorithm \ref{algo:lazy-kl-olop} enjoys the same regret bounds as in Theorem \ref{thm:regret}.
\end{theorem}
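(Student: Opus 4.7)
The plan is to prove by induction on $m$ that Algorithms~\ref{algo:kl-olop} and~\ref{algo:lazy-kl-olop} can be coupled so that their sample histories $(a^s)_{s\leq m}$ coincide; consistency of the returned sequence will then follow. The key structural object is the frontier $\LL_m^+$.

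I first establish that $\Tau_m^+$ is a complete subtree of $\Tau$, in the sense that every non-leaf node has all $K$ of its children in $\Tau_m^+$. This is immediate from Algorithm~\ref{algo:lazy-kl-olop}, which only extends $\Tau_m^+$ by adding full sibling sets $a^m_{1:t-1}A$ at once. A direct consequence is that $\LL_m^+$ is a genuine frontier: every $a\in A^L$ has a unique prefix $\pi_m(a)\in\LL_m^+$, yielding the disjoint decomposition $A^L = \bigcup_{b\in\LL_m^+} bA^{L-|b|}$. Moreover, any leaf $b\in\LL_m^+$ with $|b|<L$ satisfies $T_b(m)=0$, while $T_{b_{1:t}}(m)>0$ for all $t<|b|$, since such a $b$ was added to $\LL_m^+$ by expanding its parent, which was itself sampled.

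The central computation is to show $B_{bc}(m-1)=B_b(m-1)$ for every $b\in\LL_{m-1}^+$ and every continuation $c\in A^{L-|b|}$, with the convention $B_b(m-1)\eqdef \inf_{1\leq t\leq |b|} U_{b_{1:t}}(m-1)$. For indices $t\leq|b|$, the prefixes agree and $U_{(bc)_{1:t}}(m-1)=U_{b_{1:t}}(m-1)$. For $|b|<t\leq L$, every node $(bc)_{1:t}$ lies strictly below the frontier, so $T_{(bc)_{1:t}}(m-1)=0$ and therefore $U^{\mu}_{(bc)_{1:t}}(m-1)=\sup I$ (equal to $1$ for \KLOLOP, $+\infty$ for \OLOP). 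Substituting into \eqref{eq:Ua} and using the telescoping identity $\sum_{s=|b|+1}^{t}\gamma^s+\gamma^{t+1}/(1-\gamma)=\gamma^{|b|+1}/(1-\gamma)$ yields $U_{(bc)_{1:t}}(m-1)=U_b(m-1)\geq B_b(m-1)$ in the \KLOLOP case, and $+\infty$ in the \OLOP case; either way, these indices do not tighten the infimum, and the claim follows.

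Combining, $\argmax_{a\in A^L} B_a(m-1) = \bigcup\{bA^{L-|b|} : b\in\argmax_{b'\in\LL_{m-1}^+} B_{b'}(m-1)\}$, so every trajectory $a^m$ that Algorithm~\ref{algo:kl-olop} may sample is also an admissible choice for Algorithm~\ref{algo:lazy-kl-olop}, and conversely. Coupling the tie-breaking rule and the simulator noise, both procedures produce the same $a^m$ and the same reward $Y^m$, updating $T_\cdot$ and $S_\cdot$ identically, which closes the induction. The final recommendations agree as well: any $a\in\argmax_{a\in A^L} T_a(M)$ satisfies $T_a(M)\geq 1$ and therefore lies in $\LL_M^+$, so $\argmax_{a\in A^L} T_a(M) = \argmax_{a\in\LL_M^+} T_a(M)$; the regret bound then transfers verbatim from Theorem~\ref{thm:regret}. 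The main obstacle is the careful handling of the $+\infty$ values produced by \OLOP's Chernoff--Hoeffding bound on unvisited nodes, which one must verify never contribute to the relevant infima; the remaining work is routine bookkeeping of the frontier invariant.
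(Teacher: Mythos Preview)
Your proof is correct and follows essentially the same route as the paper: both establish that the $B$-value (resp.\ $U$-value) of any $a\in A^L$ coincides with that of its projection onto the frontier $\LL_{m-1}^+$, so that $\argmax_{A^L} B_\cdot(m-1)$ decomposes as the union of the continuation cones $bA^{L-|b|}$ over $b\in\argmax_{\LL_{m-1}^+} B_\cdot(m-1)$. Two minor points where you are actually a bit more careful than the paper: (i) you work directly with $B$-values and treat the \OLOP\ case ($\sup I=+\infty$) explicitly, whereas the paper argues with $U$-values and silently invokes $B_a=U_a$ from Remark~\ref{rmk:sharpen}, which only holds for \KLOLOP; (ii) you justify the equality $\argmax_{a\in A^L} T_a(M)=\argmax_{a\in\LL_M^+} T_a(M)$ for the final recommendation, which the paper asserts without proof.
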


\begin{property}[Time and memory complexity]
Algorithm \ref{algo:lazy-kl-olop} has time and memory complexities of:
\begin{equation*}
    C(\texttt{Lazy KL-OLOP}) = O(KLM^2)
\end{equation*}

The corresponding complexity gain compared to the original Algorithm \ref{algo:kl-olop} is: 
\begin{equation*}
    \frac{C(\texttt{Lazy KL-OLOP})}{C(\KLOLOP)} = \frac{n}{K^{L-1}}
\end{equation*}
which highlights that only a subtree corresponding to the sample budget $n$ is processed instead of the search whole tree $\Tau$.
\end{property}
\begin{proof}
At episode $m = 1, \cdots, M$, we compute and store in memory of the reward upper-bounds and U-values of all nodes in the subtree $\Tau_m^+$. Moreover, the tree $\Tau_m^+$ is constructed iteratively by adding K nodes at most L times at each episode from 0 to $m$. Hence, $|\Tau_m^+| = O(mKL)$.
This yields directly $C(\texttt{Lazy KL-OLOP}) = \sum_{m=1}^M O(mKL) = O(M^2KL)$.
\qed
\end{proof}

\section{Proof of Theorem \ref{thm:regret}}
\label{sec:regret-proof}

We follow step-by step the pyramidal proof of \citep{Bubeck2010}, and adapt it to the Kullback-Leibler upper confidence bound. The adjustments resulting from the change of confidence bounds are \diff{highlighted}. The proofs of lemmas which are not significantly altered are listed in the Supplementary Material. 

We start by recalling their notations.
Let $1 \leq H \leq L$ and $a^* \in A^L$ such that $V(a^*) = V$.
Considering sequences of actions of length $1 \leq h \leq H$, we define the subset $\mathcal{I}_h$ of near-optimal sequences and the subset $\mathcal{J}$ of sub-optimal sequences that were near-optimal at depth $h-1$:
\begin{equation*}
    \mathcal{I}_h = \left\{a \in A^h: V - V(a) \leq 2\frac{\gamma^{h+1}}{1-\gamma}\right\}, \mathcal{J}_h = \left\{a \in A^h: a_{1:h-1} \in \mathcal{I}_{h-1} \text{ and } a \not\in \mathcal{I}_h\right\}
\end{equation*}

By convention, $\mathcal{I}_0 = \{\emptyset\}$. From the definition of $\kappa_2$, we have that for any $\kappa'>\kappa_2$, there exists a constant C such that for any $h \geq 1$,
\begin{equation*}
    |\mathcal{I}_h| \leq C {\kappa'}^h
\end{equation*}
Hence, we also have $|\mathcal{J}_h| \leq K|\mathcal{I}_{h-1}| = O({\kappa'}^h)$.

Now, for $1\leq m \leq M$, $a \in A^t$ with $t \leq h$, $h'<h$, we define the set $\mathcal{P}^a_{h,h'}(m)$ of suffixes of $a$ in $\mathcal{J}_h$ that have been played at least a certain number of times:
\begin{equation*}
    \mathcal{P}^a_{h,h'}(m) = \left\{ b\in a A^{h-t}\cap \mathcal{J}_h : T_b(m) \geq \diff{2f(m)}(h+1)^2\gamma^{2(h'-h+1)} + 1 \right\}
\end{equation*}

and the random variable:

\begin{equation*}
    \tau^a_{h,h'}(m) = \mathbbm{1}\{T_a(m-1) < \diff{2f(m)}(h+1)^2\gamma^{2(h'-h+1)} + 1 \leq T_a(m)\}
\end{equation*}

\begin{lemma}[Regret and sub-optimal pulls]
\label{lemma:expected-regret}
The following holds true:
\begin{equation*}
    r_n \leq \frac{2K \gamma^{H+1}}{1-\gamma} +\frac{3K}{M}\sum_{h=1}^H\sum_{a\in\mathcal{J}_h}\frac{\gamma^h}{1-\gamma}T_a(M)
\end{equation*}
\end{lemma}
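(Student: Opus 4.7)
This lemma is purely structural: it depends only on the recommendation rule $a(n) \in \argmax_{a \in A^L} T_a(M)$ and on the partition of paths into $\mathcal{I}_h$ and $\mathcal{J}_h$, not on the form of the confidence bounds. My proof therefore reproduces the pyramidal argument of \cite{Bubeck2010} in four steps.

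\emph{Step 1: tail truncation.} For any $a \in A^L$, since $aA^\infty \subseteq a_{1:H}A^\infty$ and rewards lie in $[0,1]$, the definition \eqref{eq:value} gives $0 \leq V(a_{1:H}) - V(a) \leq \gamma^{H+1}/(1-\gamma)$. Applied to $a = a(n)$ this yields $r_n \leq V - V(a(n)_{1:H}) + \gamma^{H+1}/(1-\gamma)$. The same inequality, applied with $H$ replaced by an arbitrary depth $h' \leq H$, controls the drift $V(a_{1:h'}) - V(a_{1:H})$ by $\gamma^{h'+1}/(1-\gamma)$ and will be reused in Step 2.

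\emph{Step 2: locating the first sub-optimal prefix.} I then examine the trajectory $(a(n)_{1:h})_{0 \leq h \leq H}$. Either every prefix lies in $\mathcal{I}_h$, in which case $a(n)_{1:H} \in \mathcal{I}_H$ gives $V - V(a(n)_{1:H}) \leq 2\gamma^{H+1}/(1-\gamma)$; or there exists a smallest depth $h^* \leq H$ at which the prefix leaves $\mathcal{I}$, i.e.\ $a(n)_{1:h^*} \in \mathcal{J}_{h^*}$ and $a(n)_{1:h^*-1} \in \mathcal{I}_{h^*-1}$. In the latter case, combining $V - V(a(n)_{1:h^*-1}) \leq 2\gamma^{h^*}/(1-\gamma)$ (near-optimality of the last good prefix) with $V(a(n)_{1:h^*-1}) - V(a(n)_{1:H}) \leq \gamma^{h^*}/(1-\gamma)$ (the Step 1 inequality at depth $h^* - 1$) yields $V - V(a(n)_{1:H}) \leq 3\gamma^{h^*}/(1-\gamma)$. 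Collecting both cases gives, for every $a \in A^L$,
\begin{equation*}
V - V(a) \leq \frac{2\gamma^{H+1}}{1-\gamma} + \sum_{h=1}^H \frac{3\gamma^h}{1-\gamma}\,\mathbbm{1}\{a_{1:h} \in \mathcal{J}_h\}.
\end{equation*}

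\emph{Step 3: aggregation by visit counts.} To introduce $T_a(M)$ I multiply the per-path inequality by $T_{a(n)}(M)$ and exploit that $T_{a(n)}(M) = \max_{a \in A^L} T_a(M)$ to write $T_{a(n)}(M)\, r_n \leq \sum_{a \in A^L} T_a(M)(V - V(a))$. Substituting the per-path bound and interchanging summations, the indicator collapses via $T_b(M) = \sum_{a \in bA^{L-h}} T_a(M)$, giving $\sum_{a \in A^L:\, a_{1:h} \in \mathcal{J}_h} T_a(M) = \sum_{b \in \mathcal{J}_h} T_b(M)$. Dividing by $T_{a(n)}(M) \geq M/K$ (propagated from the most-played child of the root, as in \cite{Bubeck2010}) produces the stated inequality with leading constants $2K/(1-\gamma)$ and $3K/[M(1-\gamma)]$.

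\emph{Main obstacle.} The most delicate point, and the one that justifies the factor $K$ in the bound rather than a much worse $K^L$, is the final lower bound $T_{a(n)}(M) \geq M/K$: a naive pigeonhole across the $K^L$ leaves would only give $M/K^L$. Recovering the correct scaling relies on the structural property, inherited from \cite{Bubeck2010}, that the maximum-count leaf lives in the subtree of the most-played child of the root; I would import that argument verbatim, since the move from Hoeffding to KL confidence bounds affects neither the selection rule nor the tree structure.
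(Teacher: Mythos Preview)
Your Steps 1--2 (tail truncation and case analysis on the first depth at which a prefix leaves $\mathcal{I}_h$) and the aggregation in Step 3 match the paper's argument, which reproduces \cite{Bubeck2010} verbatim. The only cosmetic difference is that the paper bounds $\sum_{m=1}^M (V - V(a^m))$ directly rather than rewriting it as $\sum_{a\in A^L} T_a(M)(V-V(a))$; these are the same sum.

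The genuine gap is precisely where you flag the ``main obstacle,'' but your proposed resolution is wrong. The structural property you invoke---that the maximum-count leaf in $A^L$ lives in the subtree of the most-played root child and therefore has count at least $M/K$---is neither stated nor used in \cite{Bubeck2010}, and it fails on both counts. First, the inclusion need not hold: with $K=2$, $L=2$, $M=7$ and plays $(1,1)\times 2,(1,2)\times 2,(2,1)\times 3$, the most-played root child is action~$1$ while the max-count leaf $(2,1)$ sits under action~$2$. Second, even when the inclusion does hold, it lower-bounds the \emph{root child's} count by $M/K$, not the leaf's; with $K=2$, $L=2$, $M=4$ and each of the four leaves played once, the max-count leaf has count $1 < M/K = 2$.

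The paper's actual argument is much simpler and requires no structural property: it takes $a(n)$ to be the most-played \emph{first action}, i.e.\ an element of $A$, not of $A^L$. Since $\sum_{a\in A}T_a(M)=M$ and $|A|=K$, pigeonhole gives $T_{a(n)}(M)\geq M/K$ immediately, and the inequality $(V-V(a(n)))\,T_{a(n)}(M)\leq\sum_{m=1}^M(V-V(a^m))$ follows because every episode $m$ with $a^m_1=a(n)$ satisfies $V(a^m)\leq V(a(n))$. You were presumably misled by the return line of Algorithm~\ref{algo:kl-olop}, which writes $\argmax_{a\in A^L}$; the proof in the supplementary material uses $\argmax_{a\in A}$, consistent with the recommendation rule in the original \OLOP.
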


The rest of the proof is devoted to the analysis of the term $\expectedvalue \sum_{a\in \mathcal{J}_h} T_a(M)$. The next lemma describes under which circumstances a suboptimal sequence of actions in $\mathcal{J}_h$ can be selected.

\begin{lemma}[Conditions for sub-optimal pull]
\label{lemma:sub-optimal-pull}
Assume that at step $m+1$ we select a sub-optimal sequence $a^{m+1}$: there exist $0 \leq h \leq L,  a\in \mathcal{J}_h$ such that $a^{m+1} \in aA^*$. Then, it implies that one of the following propositions is true:
\begin{equation}
\label{eq:cond-ukl}
\tag{UCB violation}
    \diff{U_{a^*}(m)} < V,
\end{equation}
or
\begin{equation}
\tag{LCB violation}
\label{eq:cond-lkl}
    \sum_{t=1}^h \gamma^t \diff{L^{\mu}_{a_{1:t}}(m)} \geq V(a),
\end{equation}
or
\begin{equation}
\tag{Large CI}
\label{eq:cond-dkl}
    \sum_{t=1}^h \gamma^t\diff{\left(U^{\mu}_{a_{1:t}}(m) - L^{\mu}_{a_{1:t}}(m)\right)} > \frac{\gamma^{h+1}}{1-\gamma}
\end{equation}
\end{lemma}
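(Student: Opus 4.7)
The plan is to argue by contraposition: assume $a^{m+1}$ is selected at step $m+1$ and that neither (UCB violation) nor (LCB violation) holds, and show that (Large CI) must then be satisfied. The starting point is the greedy selection rule $B_{a^{m+1}}(m) \geq B_{a^*}(m)$, which I will transform into an inequality between U-bounds at depth $h$, before peeling it apart using the structure of $U_a(m)$ and the definition of $\mathcal{J}_h$.

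The first step reduces B-values to U-values. By Remark \ref{rmk:sharpen}, the bounds $U^\mu_b(m)$ in \KLOLOP all lie in $[0,1]$, so Lemma \ref{lemma:seq_values} applies: $U_{c_{1:t}}(m)$ is non-increasing in $t$ along any sequence $c\in A^L$, and $B_c(m) = U_c(m)$. Truncating $a^{m+1}$ at its prefix $a$ of length $h$ thus gives $U_a(m) = U_{a^{m+1}_{1:h}}(m) \geq U_{a^{m+1}}(m) = B_{a^{m+1}}(m) \geq B_{a^*}(m) = U_{a^*}(m)$. The failure of (UCB violation) then provides $U_{a^*}(m) \geq V$, so $U_a(m) \geq V$.

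The second step expands $U_a(m)$ via its definition \eqref{eq:Ua}, introduces the $L^\mu$ quantities through the identity $U^\mu = (U^\mu - L^\mu) + L^\mu$, and invokes the failure of (LCB violation), i.e. $\sum_{t=1}^h \gamma^t L^\mu_{a_{1:t}}(m) < V(a)$. A routine rearrangement yields $\sum_{t=1}^h \gamma^t (U^\mu_{a_{1:t}}(m) - L^\mu_{a_{1:t}}(m)) > V - V(a) - \gamma^{h+1}/(1-\gamma)$. Since $a\in\mathcal{J}_h$ implies $a\notin\mathcal{I}_h$, the definition of $\mathcal{I}_h$ gives $V - V(a) > 2\gamma^{h+1}/(1-\gamma)$, and substituting produces exactly the inequality (Large CI).

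The only real subtlety lies in the first step. In the original \OLOP, a B-value can be strictly smaller than the corresponding terminal U-value, so the greedy selection rule does not directly yield an inequality between U-bounds at the depth of interest; this is precisely the pitfall identified in Section \ref{sec:kl-olop-behaviour}. Here the \KLOLOP confidence bounds take values in $[0,1]$ by construction, which collapses B's and U's through Lemma \ref{lemma:seq_values}, and the remainder of the argument is an algebraic manipulation whose three discount-tail terms combine to leave exactly $\gamma^{h+1}/(1-\gamma)$ on the right-hand side.
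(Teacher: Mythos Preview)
Your proof is correct and follows essentially the same approach as the paper's own proof: both use Lemma~\ref{lemma:seq_values} (via Remark~\ref{rmk:sharpen}) to obtain $U_a(m) \geq U_{a^{m+1}}(m) \geq U_{a^*}(m)$, then assume the first two conditions fail and subtract to derive (Large CI), invoking $a\in\mathcal{J}_h$ for the final bound. The only cosmetic difference is that you route the selection inequality explicitly through B-values before collapsing them to U-values, whereas the paper works directly with U-values from the start.
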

\begin{proof}
As $a^{m+1}_{1:h} = a$ and \diff{because the U-values are monotonically increasing along sequences of actions} (see Remark \ref{rmk:sharpen} and Lemma \ref{lemma:seq_values}), we have $U_a(m) \geq U_{a^{m+1}}(m)$. Moreover, by Algorithm \ref{algo:kl-olop}, we have $a^{m+1} = \argmax_{a \in A^L}  U_a(m)$ and $a^*\in A^L$, so $U_{a^{m+1}}(m) \geq U_{a^*}(m)$ and finally $U_a(m) \geq U_{a^*}(m)$.

Assume that \eqref{eq:cond-ukl} is false, then:
\begin{equation}
\label{eq:ukl-verifie}
    \sum_{t=1}^h \gamma^t U^{\mu}_{a_{1:t}}(m) + \frac{\gamma^{h+1}}{1-\gamma} = U_a(m) \geq U_{a^*}(m) \geq V
\end{equation}
Assume that \eqref{eq:cond-lkl} is false, then:
\begin{equation}
\label{eq:lkl-verifie}
    \sum_{t=1}^h \gamma^t L^{\mu}_{a_{1:t}}(m) < V(a),
\end{equation}
By taking the difference \eqref{eq:ukl-verifie} - \eqref{eq:lkl-verifie}, 
\begin{equation*}
    \sum_{t=1}^h \gamma^t \left(U^{\mu}_{a_{1:t}}(m) - L^{\mu}_{a_{1:t}}(m)\right) + \frac{\gamma^{h+1}}{1-\gamma} > V - V(a)
\end{equation*}
But $a \in \mathcal{J}_h$, so $V - V(a) \geq \frac{2\gamma^{h+1}}{1-\gamma}$, which yields \eqref{eq:cond-dkl} and concludes the proof.
\qed
\end{proof}

\diff{In the following lemma, for each episode $m$ we bound the probability of \eqref{eq:cond-ukl} or \eqref{eq:cond-lkl} by a desired confidence level $\delta_m$, whose choice we postpone until the end of this proof. For now, we simply assume that we picked a function $f$ that satisfies $f(m)\log (m) e^{-f(m)} = O(\delta_m)$. We also denote $\Delta_M = \sum_{m=1}^{M}\delta_m$.}

\begin{lemma}[Boundary crossing probability]
\label{lemma:boundary-crossing-prob}
The following holds true, for any $1 \leq h \leq L$ and $m \leq M$,
\begin{equation*}
    \probability{\text{\eqref{eq:cond-ukl} or \eqref{eq:cond-lkl} is true}} = \diff{O((L+h)\delta_m)}
\end{equation*}
\end{lemma}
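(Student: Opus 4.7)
The plan is to reduce the compound event to a small number of per-node boundary-crossing failures of the Bernoulli KL confidence bounds, and then invoke the self-normalized deviation inequality for kl-UCB of \citep{Cappe2013}.

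First I would decompose \eqref{eq:cond-ukl}. Since rewards lie in $[0,1]$, the supremum defining $V = V(a^*)$ satisfies
\[
V \leq \sum_{t=1}^L \gamma^t \mu(a^*_{1:t}) + \frac{\gamma^{L+1}}{1-\gamma},
\]
while by \eqref{eq:Ua}, $U_{a^*}(m) = \sum_{t=1}^L \gamma^t U^{\mu}_{a^*_{1:t}}(m) + \gamma^{L+1}/(1-\gamma)$. Hence $U_{a^*}(m) < V$ forces $U^{\mu}_{a^*_{1:t}}(m) < \mu(a^*_{1:t})$ for at least one $t \in \{1,\ldots,L\}$, so the event is contained in the union of $L$ upper-confidence-bound violations along the optimal prefix. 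Symmetrically, for \eqref{eq:cond-lkl}, any continuation $b \in a A^\infty$ yields $V(a) \geq \sum_{t=1}^h \gamma^t \mu(a_{1:t})$, so $\sum_{t=1}^h \gamma^t L^{\mu}_{a_{1:t}}(m) \geq V(a)$ forces $L^{\mu}_{a_{1:t}}(m) \geq \mu(a_{1:t})$ for at least one $t \in \{1,\ldots,h\}$, placing this event inside a union of $h$ lower-bound violations along the prefix of $a$.

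Next, for each affected node $b$ at depth $t$, I would observe that conditionally on the episodes in which $a^s_{1:t} = b$, the rewards $Y^s_t$ are i.i.d.\ draws from the distribution $\nu(b)$ on $[0,1]$ with mean $\mu(b)$, which places us exactly in the setting of \citep{Cappe2013}. Their kl-UCB boundary-crossing inequality, a peeling-based self-normalized Chernoff argument, then yields
\[
\probability{U^{\mu}_b(m) < \mu(b)} \leq e\lceil f(m)\log(m)\rceil e^{-f(m)},
\]
and a symmetric statement for $\probability{L^{\mu}_b(m) > \mu(b)}$. By the hypothesis $f(m)\log(m) e^{-f(m)} = O(\delta_m)$, each individual event has probability $O(\delta_m)$, and a union bound over the $L$ failure points on the optimal prefix together with the $h$ failure points along the prefix of $a$ produces the announced $O((L+h)\delta_m)$ bound.

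The main obstacle is the boundary-crossing inequality itself, a nontrivial self-normalized deviation result borrowed from \citep{Cappe2013}; once available, the rest is a decomposition plus two union bounds. The one technical point that must be checked carefully is the symmetric lower-tail version for $L^{\mu}_b(m)$, which follows from applying the same peeling argument to the opposite tail of $\dber$. The careful separation between the length-$L$ optimal prefix and the length-$h$ prefix of $a$ is precisely what yields $L+h$ rather than $2L$ in the final count.
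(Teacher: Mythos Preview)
Your proposal is correct and mirrors the paper's own proof: reduce \eqref{eq:cond-ukl} to a union of $L$ events $\{U^{\mu}_{a^*_{1:t}}(m) \leq \mu(a^*_{1:t})\}$ via $V \leq \sum_{t=1}^L \gamma^t \mu(a^*_{1:t}) + \gamma^{L+1}/(1-\gamma)$, reduce \eqref{eq:cond-lkl} to a union of $h$ lower-bound violations via $V(a) \geq \sum_{t=1}^h \gamma^t \mu(a_{1:t})$, and then bound each term by $e\lceil f(m)\log m\rceil e^{-f(m)} = O(\delta_m)$ using the deviation inequality of \citep{Cappe2013}. The only cosmetic difference is that the paper spells out the intermediate inclusion $\{U^{\mu}_b(m)\leq\mu(b)\}\subseteq\{\hat\mu_b(m)<\mu(b),\ d(\hat\mu_b(m),\mu(b))\geq f(m)/T_b(m)\}$ before invoking the deviation lemma, whereas you cite it directly.
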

\begin{proof}
Since $V \leq \sum_{t=1}^h \gamma^t \mu(a^*_{1:t}) + \frac{\gamma^{h+1}}{1-\gamma}$, we have,
\begin{align*}
   \probability{\eqref{eq:cond-ukl}} &=  \probability{U_{a^*}(m) \leq V}\\
    &= \probability{\sum_{t=1}^L \gamma^t U^{\mu}_{a^*_{1:t}}(m) \leq \sum_{t=1}^L \gamma^t \mu(a^*_{1:t})}\\
    &\leq \probability{\exists 1\leq t \leq L : U^{\mu}_{a^*_{1:t}}(m) \leq \mu(a^*_{1:t})} \\
    &\leq \sum_{t=1}^L\probability{U^{\mu}_{a^*_{1:t}}(m) \leq \mu(a^*_{1:t})}
\end{align*}

In order to bound this quantity, \diff{we reduce the question to the application of a deviation inequality}. For all $1\leq t\leq L$, we have on the event $\{U^{\mu}_{a^*_{1:t}}(m) \leq \mu(a^*_{1:t})\}$ that $\hat{\mu}_{a^*_{1:t}}(m) \leq U^{\mu}_{a^*_{1:t}}(m) \leq \mu(a^*_{1:t}) < 1$. Therefore, for all $0 < \delta < 1 - \mu(a^*_{1:t})$, by definition of $U^{\mu}_{a^*_{1:t}}(m)$:

\begin{equation*}
    d(\hat{\mu}_{a^*_{1:t}}(m), U^{\mu}_{a^*_{1:t}}(m)+\delta) > \frac{f(m)}{T_{a^*_{1:t}}(m)}
\end{equation*}

As $d$ is continuous on $(0,1)\times[0, 1]$, we have by letting $\delta \leftarrow 0$ that:

\begin{equation*}
    d(\hat{\mu}_{a^*_{1:t}}(m), U^{\mu}_{a^*_{1:t}}(m)) \geq \frac{f(m)}{T_{a^*_{1:t}}(m)}
\end{equation*}

Since d is non-decreasing on $[\hat{\mu}_{a^*_{1:t}}(m), \mu(a^*_{1:t})]$,

\begin{equation*}
    d(\hat{\mu}_{a^*_{1:t}}(m), \mu(a^*_{1:t})) \geq d(\hat{\mu}_{a^*_{1:t}}(m), U^{\mu}_{a^*_{1:t}}(m)) \geq \frac{f(m)}{T_{a^*_{1:t}}(m)}
\end{equation*}

We have thus shown the following inclusion:

\begin{equation*}
    \{U^{\mu}_{a^*_{1:t}}(m) \leq \mu(a^*_{1:t})\} \subseteq \left\{ \mu(a^*_{1:t}) > \hat{\mu}_{a^*_{1:t}}(m) \text{ and } d(\hat{\mu}_{a^*_{1:t}}(m), \mu(a^*_{1:t})) \geq \frac{f(m)}{T_{a^*_{1:t}}(m)} \right\}
\end{equation*}

Decomposing according to the values of $T_{a^*_{1:t}}(m)$ yields:

\begin{equation*}
    \{U^{\mu}_{a^*_{1:t}}(m) \leq \mu(a^*_{1:t})\} \subseteq \bigcup_{n=1}^m \left\{ \mu(a^*_{1:t}) > \hat{\mu}_{a^*_{1:t}, n} \text{ and } d(\hat{\mu}_{a^*_{1:t}, n}, \mu(a^*_{1:t})) \geq \frac{f(m)}{n} \right\}
\end{equation*}

\diff{We now apply the deviation inequality} provided in Lemma 2 of Appendix A in \citep{Cappe2013}: $\forall \epsilon > 1$, provided that $0 < \mu(a^*_{1:t}) < 1$,

\begin{equation*}
\probability{\bigcup_{n=1}^m \left\{ \mu(a^*_{1:t}) > \hat{\mu}_{a^*_{1:t}, n} \text{ and } n \dber(\hat{\mu}_{a^*_{1:t}, n}, \mu(a^*_{1:t})) \geq \epsilon \right\}} \leq e\ceil{\epsilon \log m}e^{-\epsilon}\,.
\end{equation*}

By choosing $\epsilon = f(m)$, it comes
\begin{align*}
    \probability{\eqref{eq:cond-ukl}} &\leq \sum_{t=1}^L \diff{e\ceil{f(m)\log m}e^{-f(m)}} = \diff{O(L\delta_m)}
\end{align*}

The same reasoning gives: $\quad\displaystyle{
    \probability{\eqref{eq:cond-lkl}} = \diff{O(h\delta_m)}}$.
\qed
\end{proof}

\begin{lemma}[Confidence interval length and number of plays]
\label{lemma:ci-length}
Let $1 \leq h \leq L$, $a\in \mathcal{J}_h$ and $0 \leq h' < h$. Then  \eqref{eq:cond-dkl} is not satisfied if the following propositions are true:
\begin{equation}
\label{eq:sampled-enough-h}
   \forall 0\leq t\leq h', T_{a_{1:t}}(m) \geq \diff{2f(m)}(h+1)^2\gamma^{2(t-h-1)}
\end{equation}
and
\begin{equation}
\label{eq:sampled-enough}
   T_{a}(m) \geq \diff{2f(m)}(h+1)^2\gamma^{2(h'-h-1)}
\end{equation}
\end{lemma}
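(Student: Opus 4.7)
The statement is the contrapositive of a bound on the weighted sum in \eqref{eq:cond-dkl}: it suffices to show that, under \eqref{eq:sampled-enough-h} and \eqref{eq:sampled-enough}, the sum $\sum_{t=1}^h \gamma^t(U^\mu_{a_{1:t}}(m) - L^\mu_{a_{1:t}}(m))$ is at most $\gamma^{h+1}/(1-\gamma)$. The plan has three steps: first convert the Bernoulli KL confidence width into a quadratic one via Pinsker's inequality so that its form matches that of the assumed lower bounds on $T$; second, bound the CI width at each depth $t$ by splitting the path in two at the cutoff $h'$; third, sum the resulting geometric expressions.

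\emph{Step 1 (Pinsker).} By the definitions in \eqref{eq:u_mu_a_m}, both endpoints $U^\mu_b(m)$ and $L^\mu_b(m)$ satisfy the constraint $T_b(m)\,\dber(\hat\mu_b(m),\cdot) \leq f(m)$. Pinsker's inequality gives $\dber(p,q) \geq 2(p-q)^2$, so $|U^\mu_b(m) - \hat\mu_b(m)|$ and $|L^\mu_b(m) - \hat\mu_b(m)|$ are each at most $\sqrt{f(m)/(2T_b(m))}$, and therefore
$$U^\mu_b(m) - L^\mu_b(m) \leq \sqrt{2f(m)/T_b(m)}.$$

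\emph{Step 2 (depthwise bounds).} For $1 \leq t \leq h'$, \eqref{eq:sampled-enough-h} provides $T_{a_{1:t}}(m) \geq 2f(m)(h+1)^2\gamma^{2(t-h-1)}$, so Step 1 yields a width of at most $\gamma^{h+1-t}/(h+1)$. For $h' < t \leq h$, every trajectory passing through $a$ also passes through the prefix $a_{1:t}$, hence $T_{a_{1:t}}(m) \geq T_a(m)$; combined with \eqref{eq:sampled-enough} this yields a width of at most $\gamma^{h+1-h'}/(h+1)$.

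\emph{Step 3 (geometric sum).} Multiplying by $\gamma^t$ and summing the two pieces gives
$$\sum_{t=1}^h \gamma^t\bigl(U^\mu_{a_{1:t}}(m) - L^\mu_{a_{1:t}}(m)\bigr) \leq \frac{h'\gamma^{h+1}}{h+1} + \frac{\gamma^{h+1-h'}}{h+1}\sum_{t=h'+1}^h \gamma^t \leq \frac{\gamma^{h+1}}{h+1}\left(h' + \frac{\gamma}{1-\gamma}\right),$$
where the last step uses $\sum_{t=h'+1}^h \gamma^t \leq \gamma^{h'+1}/(1-\gamma)$. It remains to verify the elementary inequality $h' + \gamma/(1-\gamma) \leq (h+1)/(1-\gamma)$, i.e.\ $h'(1-\gamma) + \gamma \leq h+1$, which holds since $\gamma \in [0,1)$ and $h' \leq h-1$. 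The right-hand side is thus bounded by $\gamma^{h+1}/(1-\gamma)$, contradicting the strict inequality \eqref{eq:cond-dkl}. The only subtle point of the whole argument is the final arithmetic verification (which must still hold when $h'=0$ or $h'=h-1$); everything else is an immediate application of Pinsker followed by bookkeeping.
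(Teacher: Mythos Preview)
Your proof is correct and follows essentially the same route as the paper's: Pinsker's inequality to bound the KL confidence width by $\sqrt{2f(m)/T_b(m)}$, the split of the sum at depth $h'$ using \eqref{eq:sampled-enough-h} on the shallow part and the prefix monotonicity $T_{a_{1:t}}(m)\geq T_a(m)$ together with \eqref{eq:sampled-enough} on the deep part, and the same final arithmetic check that $h'+\gamma/(1-\gamma)\leq (h+1)/(1-\gamma)$. If anything, your write-up is slightly more explicit than the paper's in justifying the prefix monotonicity and the closing inequality.
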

\begin{proof}
We start by providing an explicit upper-bound for the length of the confidence interval $U^{\mu}_{a_{1:t}} - L^{\mu}_{a_{1:t}}$. By Pinsker's inequality:
 
\begin{equation*}
    \diff{\dber(p, q) > \dquad(p, q)}
\end{equation*}

Hence for all $C>0$, 
\begin{equation*}
    \dber(p, q) \leq C   \implies 2(q - p)^2 < C  \implies p - \sqrt{C/2} < q < p + \sqrt{C/2}
\end{equation*}
And thus, for all $b\in A^*$, by definition of $U^{\mu}$ and $L^{\mu}$:
\begin{align*}
    U^{\mu}_{b}(m) - L^{\mu}_{b}(m) &\leq \frac{S_b(m)}{T_b(m)} + \sqrt{\frac{f(m)}{2T_b(m)}} -  \left(\frac{S_b(m)}{T_b(m)} - \sqrt{\frac{f(m)}{2T_b(m)}}\right) 
    = \diff{\sqrt{\frac{2f(m)}{T_b(m)}}}
\end{align*}

Now, assume that \eqref{eq:sampled-enough-h} and \eqref{eq:sampled-enough} are true. Then, we clearly have:
\begin{align*}
    \sum_{t=1}^h \gamma^t\left(U^{\mu}_{a_{1:t}}(m) - L^{\mu}_{a_{1:t}}(m)\right) &\leq \sum_{t=1}^{h'} \gamma^t \sqrt{\frac{2f(m)}{T_{a_{1:t}}(m)}} + \sum_{t=h'+1}^h \gamma^t \sqrt{\frac{2f(m)}{T_{a_{1:t}}(m)}} \\
    &\leq \frac{1}{(h+1)\gamma^{-h-1}} \sum_{t=1}^{h'} 1 + \frac{1}{(h+1)\gamma^{-h-1}} \sum_{t=h'+1}^h \gamma^{t-h'}  \\
    &\leq \frac{\gamma^{h+1}}{h+1} \left( h' + \frac{\gamma}{1-\gamma} \right)\leq \frac{\gamma^{h+1}}{1-\gamma}\,.\qquad\qquad\qed
\end{align*}
\end{proof}

\begin{lemma}
\label{lemma:size_Ph}
Let $1 \leq h \leq L,  a\in \mathcal{J}_h$ and $0\leq h'<h$. Then $\tau^a_{h,h'}=1$ implies that either equation \eqref{eq:cond-ukl} or \eqref{eq:cond-lkl} is satisfied or the following proposition is true:

\begin{equation}
    \label{eq:P-min-size}
    \exists 1\leq t \leq h': |\mathcal{P}_{h,h'}^{a_{1:t}}(m)| < \gamma^{2(t-h')}
\end{equation}
\end{lemma}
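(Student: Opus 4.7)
The plan is to prove the contrapositive: assume $\tau^a_{h,h'}(m)=1$, that neither \eqref{eq:cond-ukl} nor \eqref{eq:cond-lkl} holds, and also suppose that $|\mathcal{P}^{a_{1:t}}_{h,h'}(m)|\geq\gamma^{2(t-h')}$ for every $t\in\{1,\dots,h'\}$, then derive a contradiction by chaining Lemmas \ref{lemma:sub-optimal-pull} and \ref{lemma:ci-length}. Since $\tau^a_{h,h'}(m)=1$, at step $m$ the agent plays a sequence $a^m$ satisfying $a^m_{1:h}=a\in\mathcal{J}_h$, which is sub-optimal. Applying Lemma \ref{lemma:sub-optimal-pull} (whose three alternatives are evaluated on the statistics at time $m-1$) and ruling out \eqref{eq:cond-ukl} and \eqref{eq:cond-lkl} by assumption, the Large CI condition \eqref{eq:cond-dkl} must hold at time $m-1$.

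Next, I convert the two remaining hypotheses into the sampling lower bounds required by Lemma \ref{lemma:ci-length}. Partitioning the descendants of $a_{1:t}$ of length $h$ gives
\begin{equation*}
T_{a_{1:t}}(m)=\sum_{b\in a_{1:t}A^{h-t}}T_b(m)\geq \sum_{b\in\mathcal{P}^{a_{1:t}}_{h,h'}(m)}T_b(m)\geq |\mathcal{P}^{a_{1:t}}_{h,h'}(m)|\left(2f(m)(h+1)^2\gamma^{2(h'-h+1)}+1\right),
\end{equation*}
by the defining lower bound on $T_b$ for $b\in\mathcal{P}^{a_{1:t}}_{h,h'}(m)$. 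Plugging in the hypothesis $|\mathcal{P}^{a_{1:t}}_{h,h'}(m)|\geq \gamma^{2(t-h')}$, combining exponents, and accounting for the one-step gap $T_{a_{1:t}}(m)=T_{a_{1:t}}(m-1)+1$ (since the pull at step $m$ extends $a_{1:t}$) yields the hypothesis \eqref{eq:sampled-enough-h} at time $m-1$ for every $t\in\{1,\dots,h'\}$. Likewise, $\tau^a_{h,h'}(m)=1$ directly gives $T_a(m-1)\geq 2f(m)(h+1)^2\gamma^{2(h'-h+1)}$, which, after the same algebra, delivers \eqref{eq:sampled-enough} at time $m-1$.

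Finally, Lemma \ref{lemma:ci-length} applied at time $m-1$ concludes that \eqref{eq:cond-dkl} is \emph{not} satisfied at $m-1$, contradicting the first paragraph. Therefore the standing assumption on $|\mathcal{P}|$ must fail for some $t\in\{1,\dots,h'\}$, which is exactly \eqref{eq:P-min-size}. The main obstacle is the exponent bookkeeping: one has to verify that the exponent $\gamma^{2(t-h')}$ produced by $|\mathcal{P}|$ combined with $\gamma^{2(h'-h+1)}$ from the threshold, together with the $+1$ offsets absorbing the gap between $T(m)$ and $T(m-1)$, reproduces the exponent $\gamma^{2(t-h-1)}$ demanded by Lemma \ref{lemma:ci-length}. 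The thresholds appearing in the definitions of $\tau$ and $\mathcal{P}$ are calibrated precisely for this algebra to close, after which the argument reduces to a clean contradiction via the two previous lemmas.
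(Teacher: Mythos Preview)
Your argument is exactly the paper's: use Lemma \ref{lemma:sub-optimal-pull} to reduce to \eqref{eq:cond-dkl}, then contradict it via Lemma \ref{lemma:ci-length} after converting the assumed lower bound on $|\mathcal{P}^{a_{1:t}}_{h,h'}(m)|$ into the sampling bound \eqref{eq:sampled-enough-h} through $T_{a_{1:t}}(m)\geq \sum_{b\in\mathcal{P}^{a_{1:t}}_{h,h'}(m)}T_b(m)$. Your explicit shift to time $m-1$ (using $T_{a_{1:t}}(m)=T_{a_{1:t}}(m-1)+1$ and the ``$+1$'' in the thresholds) is more careful than the paper, which leaves the time index implicit. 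One caveat on the bookkeeping you flag: with the exponent $\gamma^{2(h'-h+1)}$ as printed in the definitions of $\tau$ and $\mathcal{P}$, the product $\gamma^{2(t-h')}\cdot\gamma^{2(h'-h+1)}=\gamma^{2(t-h+1)}$ does \emph{not} reach the $\gamma^{2(t-h-1)}$ required by \eqref{eq:sampled-enough-h}; this is a typo in the paper's definitions (its own proof and the original \OLOP analysis use $\gamma^{2(h'-h-1)}$), and once that is corrected your algebra closes as claimed.
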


\begin{lemma}
\label{lemma:expected-P-size}
Let $1\leq h\leq L$ and $0 \leq h' < h$. Then the following holds true,

\begin{equation*}
    \expectedvalue |\mathcal{P}^\emptyset_{h,h'}(M)| = \tilde{O}\left(\gamma^{-2h'}\mathbbm{1}_{h'>0}\sum_{t=0}^{h'}(\gamma^2 \kappa')^t + (\kappa')^h\diff{\Delta_M} \right).
\end{equation*}
\end{lemma}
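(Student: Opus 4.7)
The plan is to decompose $|\mathcal{P}^\emptyset_{h,h'}(M)|$ as a double sum of indicators $\tau^a_{h,h'}(m)$ and then apply Lemma~\ref{lemma:size_Ph} to split each contribution into a ``boundary-crossing'' part (the events \eqref{eq:cond-ukl} or \eqref{eq:cond-lkl}) and a ``small-$\mathcal{P}$'' part (the existence of $1\leq t \leq h'$ with $|\mathcal{P}^{a_{1:t}}_{h,h'}(m)| < \gamma^{2(t-h')}$). Three structural observations drive the whole argument: (i) $f(m)$ does not depend on $m$ in \KLOLOP, so the threshold in the definition of $\mathcal{P}^a_{h,h'}$ is constant and $T_a(\cdot)$ is non-decreasing, hence each $\tau^a_{h,h'}(\cdot)$ equals $1$ at most once and the family $\bigl(\mathcal{P}^b_{h,h'}(m)\bigr)_m$ is monotone non-decreasing in $m$; (ii) at each episode only one length-$L$ sequence is played, so at most one $a\in A^h$ has $T_a$ incremented; (iii) for any $a \in \mathcal{J}_h$ and any $t \leq h-1$, one has $a_{1:t}\in \mathcal{I}_t$, since $V(a_{1:t}) \geq V(a_{1:h-1})$ by extending any continuation.

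For the boundary-crossing part I would apply Lemma~\ref{lemma:boundary-crossing-prob} directly, giving $\probability{\eqref{eq:cond-ukl}\text{ or }\eqref{eq:cond-lkl}} = O((L+h)\delta_m)$. Summing over $m \in \{1,\ldots,M\}$ yields a factor $\Delta_M$, and summing over $a \in \mathcal{J}_h$ a factor $|\mathcal{J}_h| = O((\kappa')^h)$, so this part contributes $\tilde{O}\bigl((\kappa')^h \Delta_M\bigr)$ after absorbing $L+h$ into the $\tilde{O}$.

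For the small-$\mathcal{P}$ part I would regroup the double sum by the common prefix $b = a_{1:t}$ at depth $t\in\{1,\ldots,h'\}$, with $b$ ranging over $\mathcal{I}_t$ thanks to observation~(iii). For any fixed such $b$, observations~(i) and~(ii) together imply that at most one suffix of $b$ joins $\mathcal{P}^b_{h,h'}$ per episode and that no suffix ever leaves it, so the condition $|\mathcal{P}^b_{h,h'}(m)| < \gamma^{2(t-h')}$ can be witnessed by at most $\lceil \gamma^{2(t-h')}\rceil$ of these monotone additions. Summing over $b$ and $t$ produces the deterministic bound
\begin{equation*}
    \sum_{t=1}^{h'} |\mathcal{I}_t|\,\lceil \gamma^{2(t-h')}\rceil = O\!\Bigl(\gamma^{-2h'} \sum_{t=1}^{h'} (\gamma^2\kappa')^t\Bigr),
\end{equation*}
which matches the first term of the stated bound up to constants hidden in $\tilde{O}$; the shift between $\sum_{t=0}^{h'}$ and $\sum_{t=1}^{h'}$ is immaterial, and the indicator $\mathbbm{1}_{h'>0}$ handles the degenerate case $h'=0$ where Lemma~\ref{lemma:size_Ph} leaves only the boundary-crossing term.

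The main obstacle is the counting step for the small-$\mathcal{P}$ part: one must carefully combine the ``one addition per episode'' observation with the monotonicity coming from $f$ being constant in $m$ to argue that the condition $|\mathcal{P}^b_{h,h'}(m)| < \gamma^{2(t-h')}$ can only be triggered $O(\gamma^{2(t-h')})$ times per ancestor $b$, yielding a purely combinatorial bound with no expectation. Once this decomposition is in place, everything else, namely the union bound, the deviation inequality invoked through Lemma~\ref{lemma:boundary-crossing-prob}, and the containment $a_{1:t} \in \mathcal{I}_t$, is routine bookkeeping.
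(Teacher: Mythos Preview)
Your argument is correct and takes a genuinely more direct route than the paper's. The paper (following \cite{Bubeck2010}) introduces stopping times $m_s^a = \min\bigl(M,\min\{m\geq 0: |\mathcal{P}^a_{h,h'}(m)|\geq \gamma^{2(s-h')}\}\bigr)$ and proves by induction on $s$ the layered decomposition
\[
|\mathcal{P}^\emptyset_{h,h'}(m)| \;\leq\; \sum_{t=0}^{s}\gamma^{2(t-h')}|\mathcal{I}_t| \;+\; \sum_{a\in\mathcal{I}_s}\Bigl|\mathcal{P}^a_{h,h'}(m)\setminus\textstyle\bigcup_{t=0}^{s}\mathcal{P}^{a_{1:t}}_{h,h'}\bigl(m_t^{a_{1:t}}\bigr)\Bigr|,
\]
takes $s=h'$, and observes that the residual only collects those $\tau$-events occurring \emph{after} every ancestral $\mathcal{P}$-set has already reached its threshold, so that \eqref{eq:P-min-size} fails and Lemma~\ref{lemma:size_Ph} forces a boundary-crossing event. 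You bypass the recursion and stopping times entirely: you write $|\mathcal{P}^\emptyset_{h,h'}(M)|=\sum_{a,m}\tau^a_{h,h'}(m)$, split each indicator via Lemma~\ref{lemma:size_Ph}, and handle the small-$\mathcal{P}$ piece by the purely combinatorial observation that, for each ancestor $b\in\mathcal{I}_t$, at most $\lceil\gamma^{2(t-h')}\rceil$ additions to $\mathcal{P}^b_{h,h'}$ can occur while its size is still below $\gamma^{2(t-h')}$. Both arguments rely on the same monotonicity of $m\mapsto\mathcal{P}^b_{h,h'}(m)$ (coming from $f$ being constant in $m$) and produce identical bounds; yours is shorter and avoids the auxiliary random times, while the paper's recursive peeling makes the successive refinement along the near-optimal hierarchy $(\mathcal{I}_t)_t$ more visible structurally.
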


\begin{lemma}
\label{lemma:expected-plays-count}
Let $1\leq h\leq L$. The following holds true,
\begin{equation*}
    \expectedvalue \sum_{a\in \mathcal{J}_h} T_a(M) = \tilde{O}\left(\gamma^{-2h} + \diff{(\kappa')^h(1+M\Delta_M+\Delta_M) + (\kappa'\gamma^{-2})^h\Delta_M}\right)
\end{equation*}
\end{lemma}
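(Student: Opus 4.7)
The plan is to adapt the pyramid argument of \citep{Bubeck2010} to the KL confidence bounds, combining all four preceding lemmas of Section \ref{sec:regret-proof}. I expect the three terms in the statement to arise from three distinct sources: a deterministic ceiling on the count of each sub-optimal sequence gives the $\gamma^{-2h}$ term, the confidence-bound violations give the middle $(\kappa')^h(1+M\Delta_M+\Delta_M)$ term, and the small-cardinality events of Lemma \ref{lemma:size_Ph} give the $(\kappa'\gamma^{-2})^h\Delta_M$ term.

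First I would decompose $\sum_{a\in\mathcal{J}_h} T_a(M)$ by splitting each $T_a(M)$ at the crossings encoded by $\tau^a_{h,h'}$ for $h'\in\{0,\ldots,h-1\}$. Since $\sum_m\tau^a_{h,h'}(m)\leq 1$ by construction, summing over $a\in\mathcal{J}_h$ recovers exactly $|\mathcal{P}^\emptyset_{h,h'}(M)|$. For every such crossing, Lemma \ref{lemma:size_Ph} forces one of three events: \eqref{eq:cond-ukl}, \eqref{eq:cond-lkl}, or the small-set condition $|\mathcal{P}^{a_{1:t}}_{h,h'}(m)|<\gamma^{2(t-h')}$ for some $1\leq t\leq h'$.

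For the two confidence-violation events, Lemma \ref{lemma:boundary-crossing-prob} yields a per-episode probability $O((L+h)\delta_m)$. Summing over $m\leq M$ produces a $\Delta_M$ factor, summing over the $O((\kappa')^h)$ sequences of $\mathcal{J}_h$ produces the $(\kappa')^h$ factor, and accounting for the up to $M$ plays that may follow a single violation yields the $(\kappa')^h(1+M\Delta_M+\Delta_M)$ middle contribution. For the small-set condition, I would invoke Lemma \ref{lemma:expected-P-size} at each intermediate prefix $a_{1:t}$, exploit the geometric sum $\sum_{t=0}^{h'}(\gamma^2\kappa')^t$ together with the $\gamma^{-2h'}$ prefactor, and optimise the choice of $h'$ to close the argument; the dominating contributions then scale as $\gamma^{-2h}$ and $(\kappa'\gamma^{-2})^h\Delta_M$, exactly matching the first and third terms of the bound.

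The main obstacle I anticipate is the constant-tracking at the interface between the stopping-time threshold $2f(m)(h+1)^2\gamma^{2(h'-h+1)}+1$ (appearing in $\tau^a_{h,h'}$ and $\mathcal{P}^a_{h,h'}$) and the slightly different threshold $2f(m)(h+1)^2\gamma^{2(h'-h-1)}$ required by Lemma \ref{lemma:ci-length}: a $\gamma^{-4}$ mismatch must be absorbed into a shift of the index $h'$ without inflating the final exponents. A secondary issue is that Lemma \ref{lemma:expected-P-size} is stated only for the empty prefix $\mathcal{P}^\emptyset_{h,h'}$, whereas the implication chain provides information about $\mathcal{P}^{a_{1:t}}_{h,h'}$ for arbitrary prefixes $a_{1:t}$; one must either generalise Lemma \ref{lemma:expected-P-size} (its proof extends verbatim) or aggregate via a union bound over prefixes, whose extra $(\kappa')^t$ factor is compensated by the geometric decay already present inside the sum.
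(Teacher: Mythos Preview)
Your proposal takes a substantially more convoluted route than the paper, and in doing so re-derives machinery that is already packaged in Lemma \ref{lemma:expected-P-size}. The paper's proof is a three-line layering argument: partition $\mathcal{J}_h$ as
\[
\mathcal{J}_h=\bigl(\mathcal{J}_h\setminus\mathcal{P}^\emptyset_{h,h-1}(M)\bigr)\;\cup\;\bigcup_{h'=1}^{h-1}\bigl(\mathcal{P}^\emptyset_{h,h'}(M)\setminus\mathcal{P}^\emptyset_{h,h'-1}(M)\bigr)\;\cup\;\mathcal{P}^\emptyset_{h,0}(M),
\]
bound $T_a(M)$ on each layer by the defining threshold of $\mathcal{P}^\emptyset_{h,h'-1}$ (which is deterministic, of order $f(M)(h+1)^2\gamma^{2(h'-h)}$), and then take expectations and plug in Lemma \ref{lemma:expected-P-size} for $\expectedvalue|\mathcal{P}^\emptyset_{h,h'}(M)|$ and $\expectedvalue|\mathcal{P}^\emptyset_{h,0}(M)|$. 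No further appeal to Lemmas \ref{lemma:sub-optimal-pull}--\ref{lemma:size_Ph} is needed; those are already absorbed into Lemma \ref{lemma:expected-P-size}.

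Your plan instead re-opens that box: after reaching $|\mathcal{P}^\emptyset_{h,h'}(M)|$ you invoke Lemma \ref{lemma:size_Ph} at each crossing and then try to apply Lemma \ref{lemma:expected-P-size} at \emph{intermediate prefixes} $a_{1:t}$. This is exactly the recursion that the proof of Lemma \ref{lemma:expected-P-size} itself performs, so you are inlining that lemma rather than using it. Both obstacles you anticipate---the $\gamma^{-4}$ threshold mismatch and the need to generalise Lemma \ref{lemma:expected-P-size} to non-empty prefixes---are artefacts of this detour and simply do not arise in the paper's argument. Your attribution of the three terms is also off: the bare $(\kappa')^h$ contribution comes from the deterministic ceiling on the top layer $\mathcal{J}_h\setminus\mathcal{P}^\emptyset_{h,h-1}$ (size $\tilde O((\kappa')^h)$, threshold $\tilde O(1)$); the $\gamma^{-2h}$ term arises from the intermediate layers via the first summand of Lemma \ref{lemma:expected-P-size}; the $(\kappa'\gamma^{-2})^h\Delta_M$ term arises from those same layers via the second summand; and the $(\kappa')^h M\Delta_M$ term comes from the bottom layer $M\cdot\expectedvalue|\mathcal{P}^\emptyset_{h,0}(M)|$.
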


Thus by combining Lemma \ref{lemma:expected-regret} and \ref{lemma:expected-plays-count} we obtain:
\begin{equation*}
    \expectedvalue r_n = \tilde{O}\left(\gamma^H + \gamma^{-H}M^{-1} + (\kappa' \gamma)^{H}M^{-1}\diff{(1+M\Delta_M+\Delta_M)} + (\kappa')^H\gamma^{-H}M^{-1}\diff{\Delta_M}\right)
\end{equation*}
Finally,
\begin{itemize}
    \item if $\kappa'\gamma^2 \leq 1$, we take $H = \floor{\log M / (2\log 1/\gamma)}$ to obtain:
    \begin{equation*}
        \expectedvalue r_n = \tilde{O}\left(M^{-\frac{1}{2}} + M^{-\frac{1}{2}} + M^{-\frac{1}{2}}M^{\frac{\log \kappa'}{2\log 1/\gamma}} \diff{\Delta_M}\right)
    \end{equation*}
    For the last term to be of the same order of the others, we need to have $\Delta_M = O(M^{-\frac{\log \kappa'}{2\log 1/\gamma}})$. Since $\kappa'\gamma^2 \leq 1$, we achieve this by taking \diff{$\Delta_M = O(M^{-1})$}.
    \item if $\kappa'\gamma^2 > 1$, we take $H = \floor{\log M / \log \kappa'}$ to obtain:
    \begin{equation*}
        \expectedvalue r_n = \tilde{O}\left(M^{\frac{\log \gamma}{\log \kappa'}} + M^{\frac{\log \gamma}{\log \kappa'}}\diff{(1+M\Delta_M+\Delta_M)} + M^{\frac{\log 1/\gamma}{\log \kappa'}}\diff{\Delta_M}\right)
    \end{equation*}
    Since $\kappa'\gamma^2 > 1$, the dominant term in this sum is $M^{\frac{\log \gamma}{\log \kappa'}}M\Delta_M$. Again, taking \diff{$\Delta_M = O(M^{-1})$} yields the claimed bounds.
\end{itemize}
Thus, the claimed bounds are obtained in both cases as long as we can impose $\Delta_M = O(M^{-1})$, that is, find a sequence $(\delta_m)_{1\leq m\leq M}$ and a function $f$ verifying:
\begin{equation}
    \sum_{m=1}^M \delta_m = O(M^{-1})\quad \text{and}\quad f(m)\log (m) e^{-f(m)} = O(\delta_m)
\end{equation}

By choosing \diff{$\delta_m = M^{-2}$ and $f(m) = 2 \log M + 2 \log\log M$}, the corresponding \KLOLOP algorithm does achieve the regret bound claimed in Theorem \ref{thm:regret}.

\section{Experiments}
\label{sec:experiments}

We have performed some numerical experiments to evaluate and compare the following planning algorithms\footnote[1]{The source code is available at \url{https://eleurent.github.io/kl-olop/}}:
\begin{itemize}
    \item \texttt{Random}: returns an action at random, we use it as a minimal performance baseline.
    \item \texttt{OPD}: the \emph{Optimistic Planning for Deterministic systems} from \citep{Hren2008}, used as a baseline of optimal performance. This planner is only suited for deterministic environments, and exploits this property to obtain faster rates. However, it is expected to fail in stochastic environments.
    \item \OLOP: as described in section \ref{sec:kl-olop-olop}.\footnote[2]{Note that we use the lazy version of $\OLOP$ and $\KLOLOP$ presented in Section \ref{sec:time-complexity}, otherwise the exponential running-time would have been prohibitive.}
    \item \KLOLOP: as described in section \ref{sec:kl-olop-kl-olop}.\footnotemark[2]
    \item \texttt{KL-OLOP}(1): an aggressive version of \KLOLOP where we used $f_1(m) = \log M$ instead of $f_2(m)$. This threshold function makes the upper bounds even tighter, at the cost of an increase probability of violation. Hence, we expect this solution to be more efficient in close-to-deterministic environments. However, since we have no theoretical guarantee concerning its regret as we do with $\KLOLOP$, it might not be conservative enough and converge too early to a suboptimal sequence, especially in highly stochastic environments.
\end{itemize}

They are evaluated on the following tasks, using a discount factor of $\gamma=0.8$:
\begin{itemize}
    \item A \href{https://github.com/eleurent/highway-env/}{highway driving} environment \citep{Leurent2018}: a vehicle is driving on a road randomly populated with other slower drivers, and must make their way as fast as possible while avoiding collisions by choosing on the the following actions: \texttt{change-lane-left}, \texttt{change-lane-right}, \texttt{no-op}, \texttt{faster}, \texttt{slower}.
    \item A \href{https://github.com/maximecb/gym-minigrid}{gridworld} environment \citep{gym_minigrid}: the agent navigates in a randomly-generated gridworld composed of either empty cells, terminal lava cells, and goal cells where a reward of $1$ is collected at the first visit.
    \item A stochastic version of the gridworld environment with noisy rewards, where the noise is modelled as a Bernoulli distribution with a 15\% probability of error, i.e. receiving a reward of 1 in an empty cell or 0 in a goal cell.
\end{itemize}

\begin{figure}[htp]
\centering

\subfloat{
	\includegraphics[width=\textwidth]{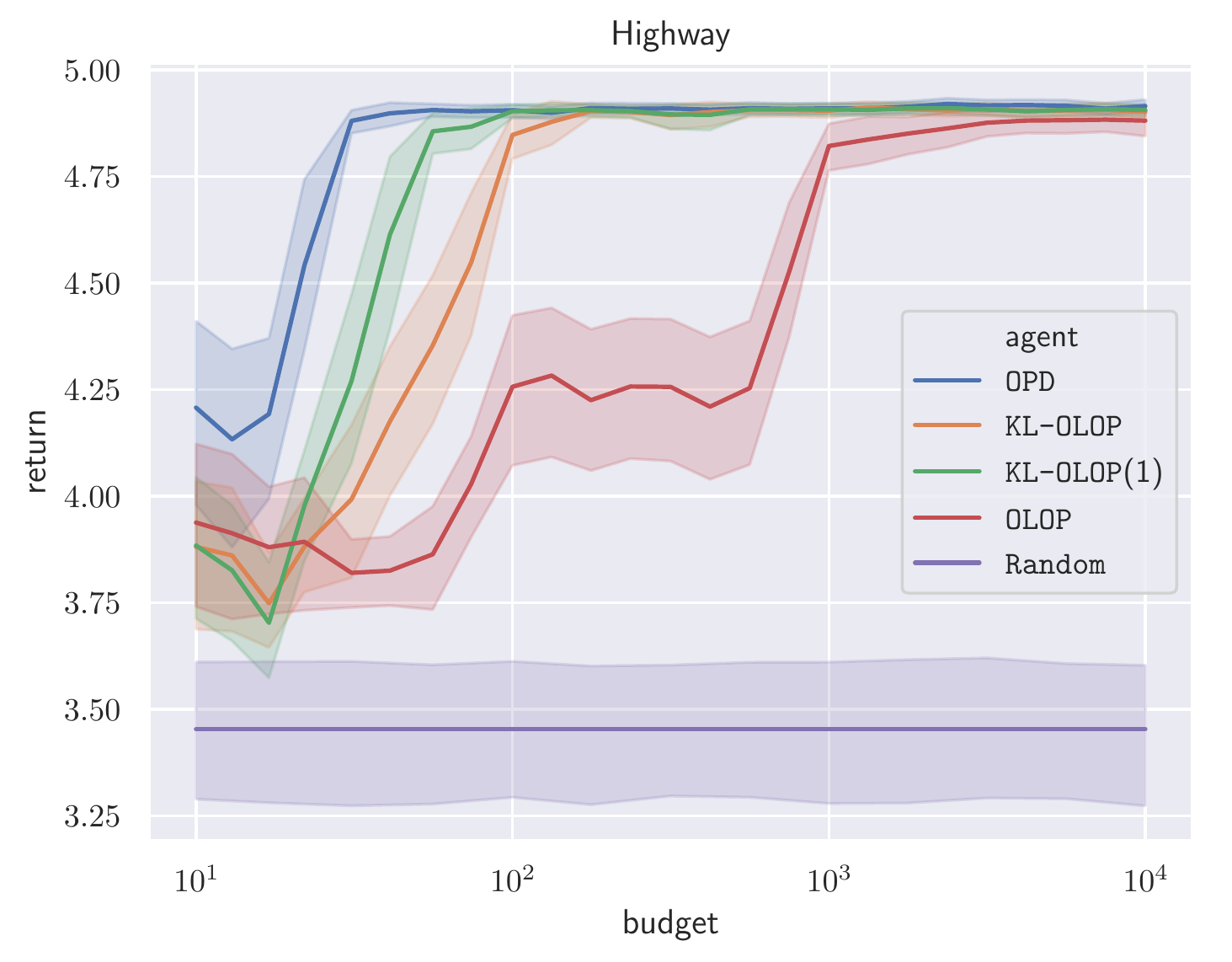}
	\label{sub:highway}
}
\newline
\subfloat{
	\includegraphics[width=0.5\textwidth]{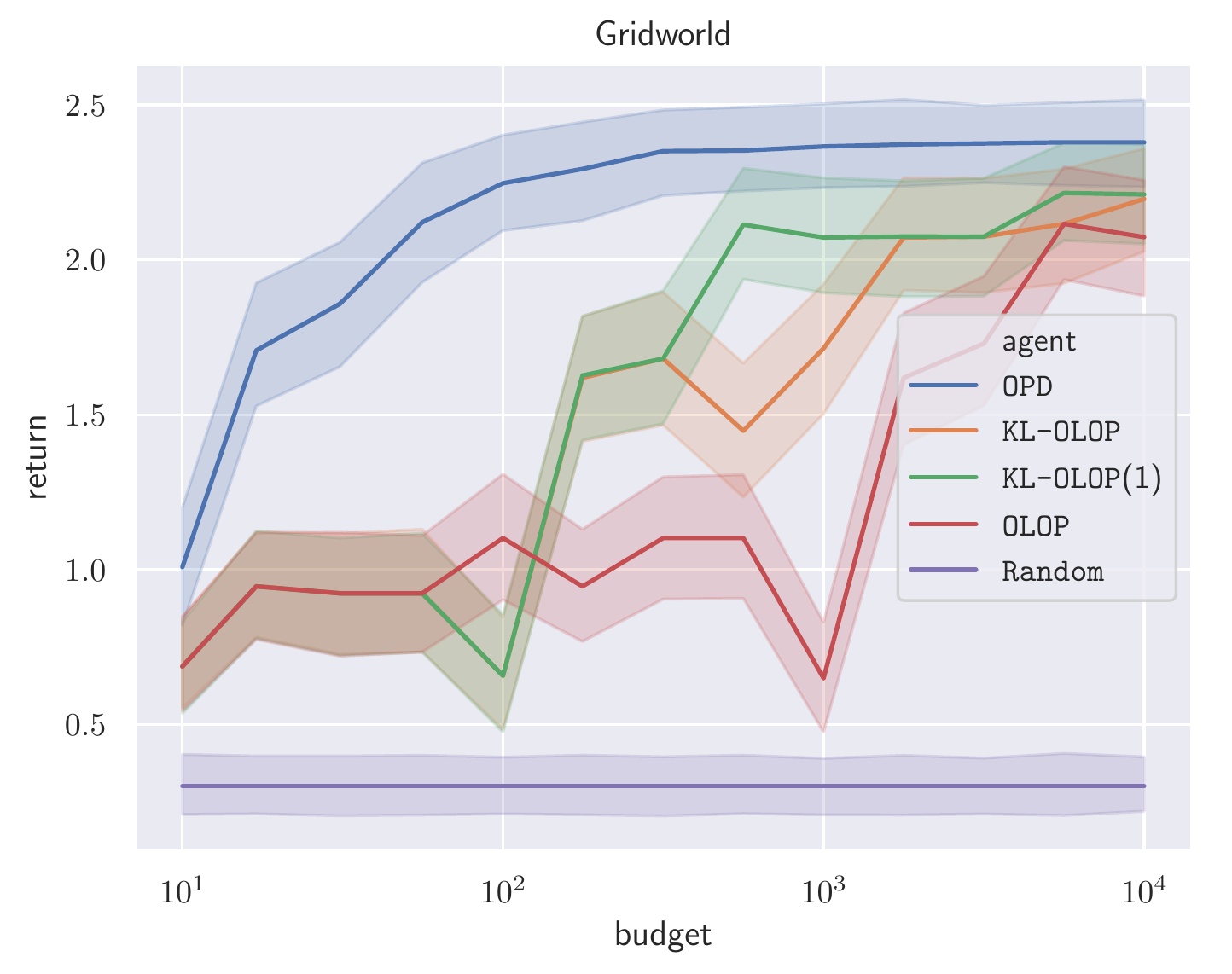}
	\label{sub:gridworld}
}
\subfloat{
	\includegraphics[width=0.5\textwidth]{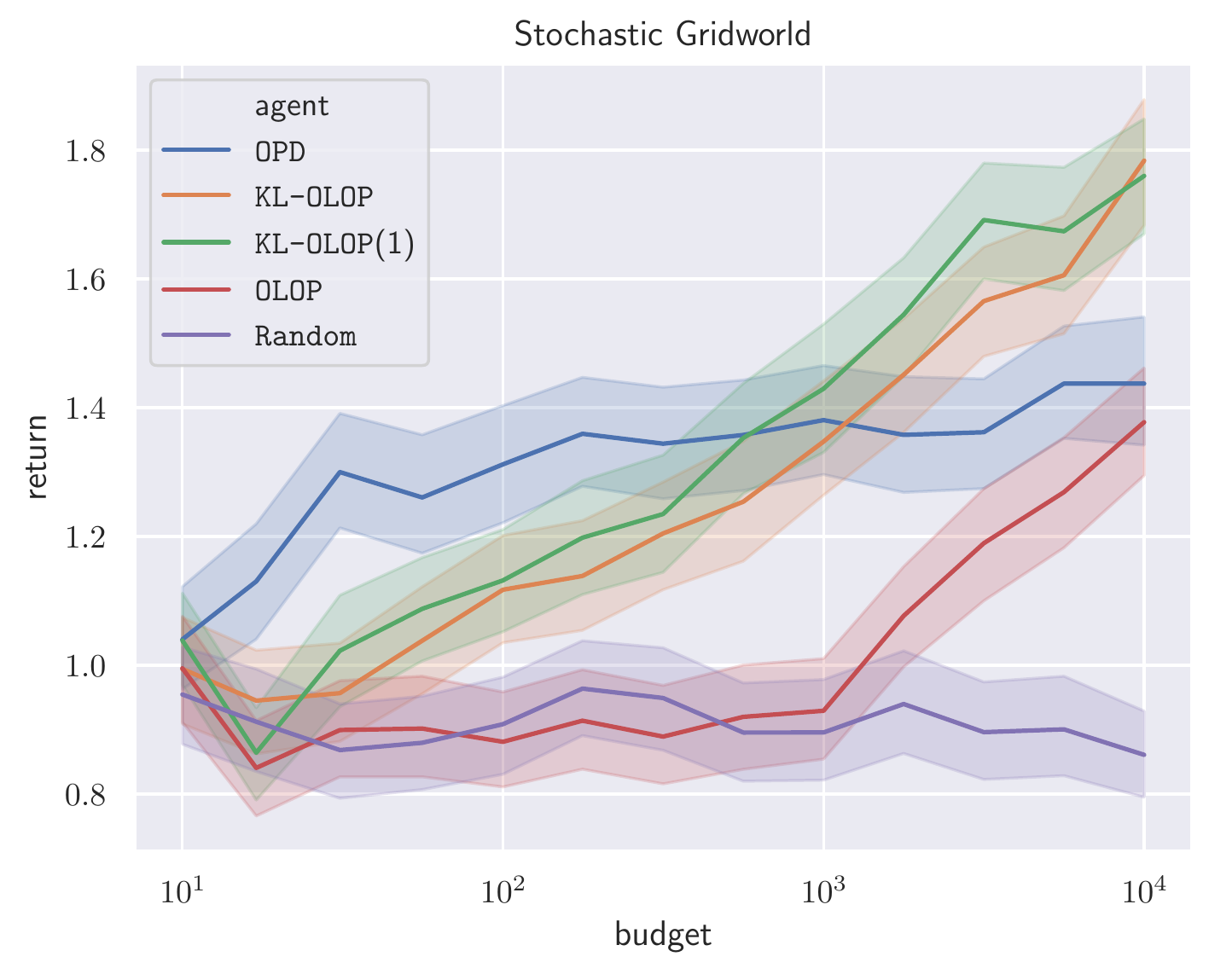}
	\label{sub:gridworld_stoch}
}
    \caption{Numerical experiments: for each environment-agent configuration, we compute the average return over 100 runs –- along with its 95\% confidence interval –- with respect to the available budget $n$.}
    \label{fig:experiments}
\end{figure}

The results of our experiments are shown in Figure \ref{fig:experiments}. The \texttt{ODP} algorithm converges very quickly to the optimal return in the two first environments, shown in Figure \ref{sub:highway} and Figure \ref{sub:gridworld}, because it exploits their deterministic nature: it needs neither to estimate the rewards through upper-confidence bounds nor to sample whole sequences all the way from the root when expanding a leaf, which provides a significant speedup. It can be seen of an oracle allowing to measure the conservativeness of stochastic planning algorithms. And indeed, even before introducing stochasticity, we can see that \OLOP performs quite badly on the two environments, only managing to solve them with a budget in the order of $10^{3.5}$. In stark contrast, \KLOLOP makes a much better use of its samples and reaches the same performance an order of magnitude faster. Examples of expanded trees are showcased in the Supplementary Material. Furthermore, in the stochastic gridworld environment shown in Figure \ref{sub:gridworld_stoch}, we observe that the deterministic \texttt{ODP} planner's performance saturates as it settles to suboptimal trajectories, as expected. Conversely, the stochastic planners all find better-performing open-loop policies, which justifies the need for this framework. Again, \KLOLOP converges an order of magnitude faster than \OLOP. Finally, \texttt{KL-OLOP}(1) enjoys good performance overall and displays the most satisfying trade-off between aggressiveness in deterministic environments and conservativeness in stochastic environments; hence we recommend this tuning for practical use.

\section{Conclusion}

We introduced an enhanced version of the \OLOP algorithm for open-loop online planning, whose design was motivated by an investigation of the over-conservative search behaviours of \OLOP. We analysed its sample complexity and showed that the original regret bounds are preserved, while its empirical performances are increased by an order of magnitude in several numerical experiments. Finally, we proposed an efficient implementation that benefits from a substantial speedup, facilitating its use for real-time planning applications.

\bibliographystyle{splncs04}

\newpage
\appendix
\begin{center}
    \Large Supplementary Material
\end{center}

\section{Examples of expanded trees}

The trees expanded by different planning algorithms in the Highway environment are displayed in Figure \ref{fig:trees}. They were all generated in the same initial root state, under the same budget of $n=10^3$ calls to the generative model. We observe that \ODP exploits the deterministic setting efficiently and produces a very sparse tree densely concentrated around the optimal trajectory. Conversely, the tree developed by the \OLOP algorithm is quite evenly balanced, which suggests that \OLOP behaves as uniform planning as hypothesised in \ref{sec:kl-olop-behaviour}. In stark contrast, \KLOLOP is much more efficient and produced a highly unbalanced tree, exploring the same regions of the tree as \ODP. 

\begin{figure}[pht]
    \centering

    \includegraphics[width=0.64\textwidth]{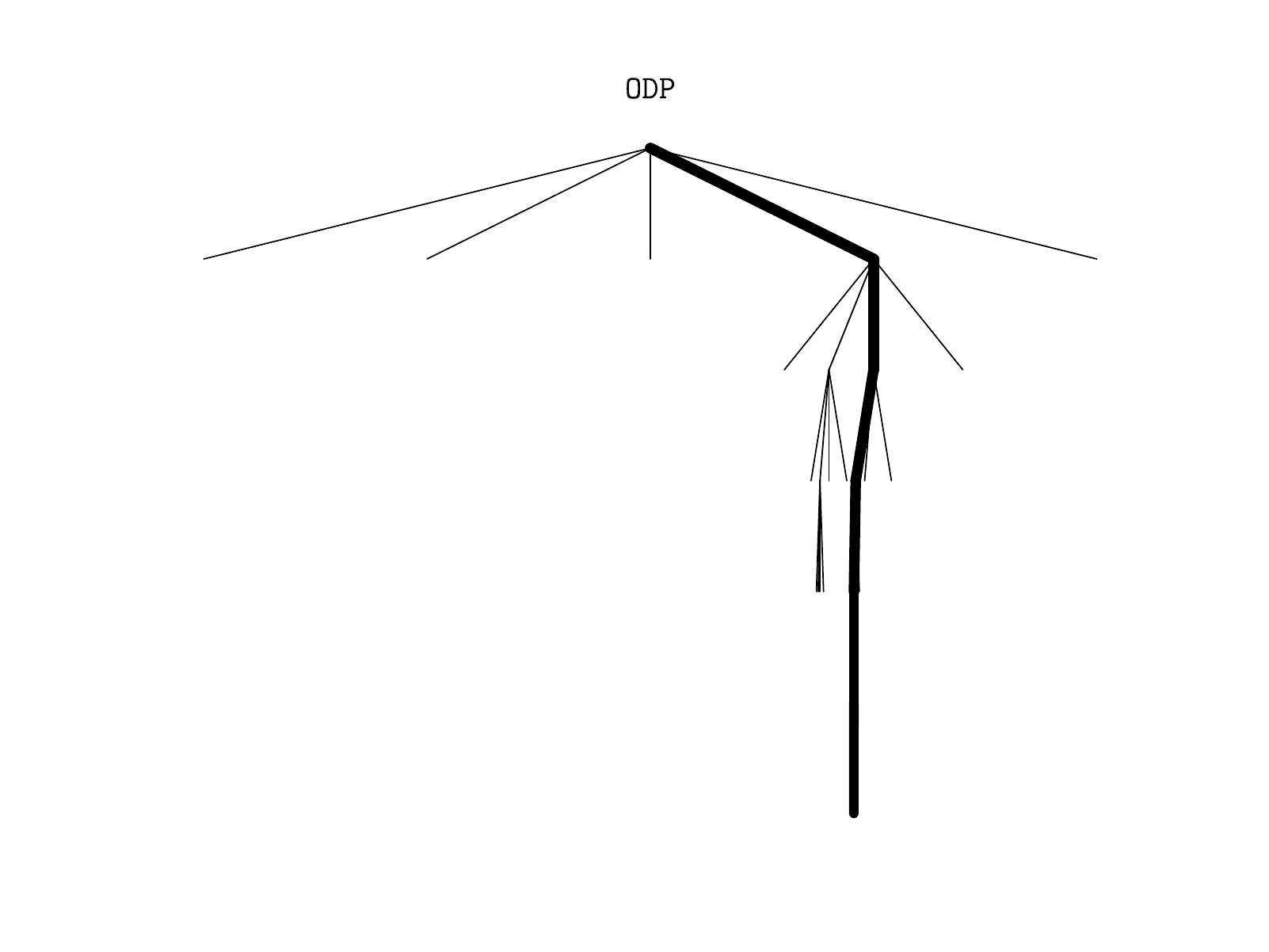}    \includegraphics[width=0.64\textwidth]{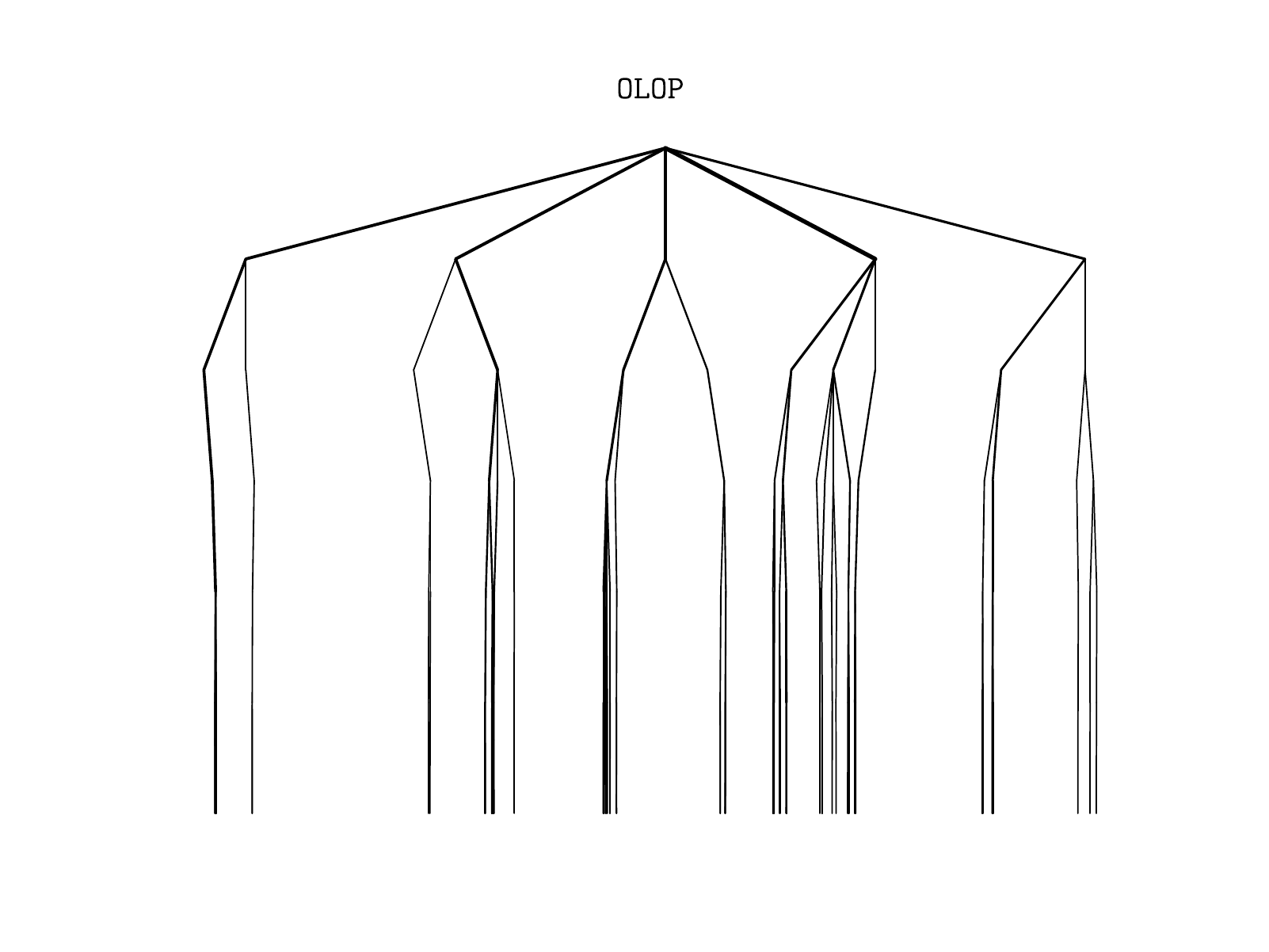}
    \includegraphics[width=0.64\textwidth]{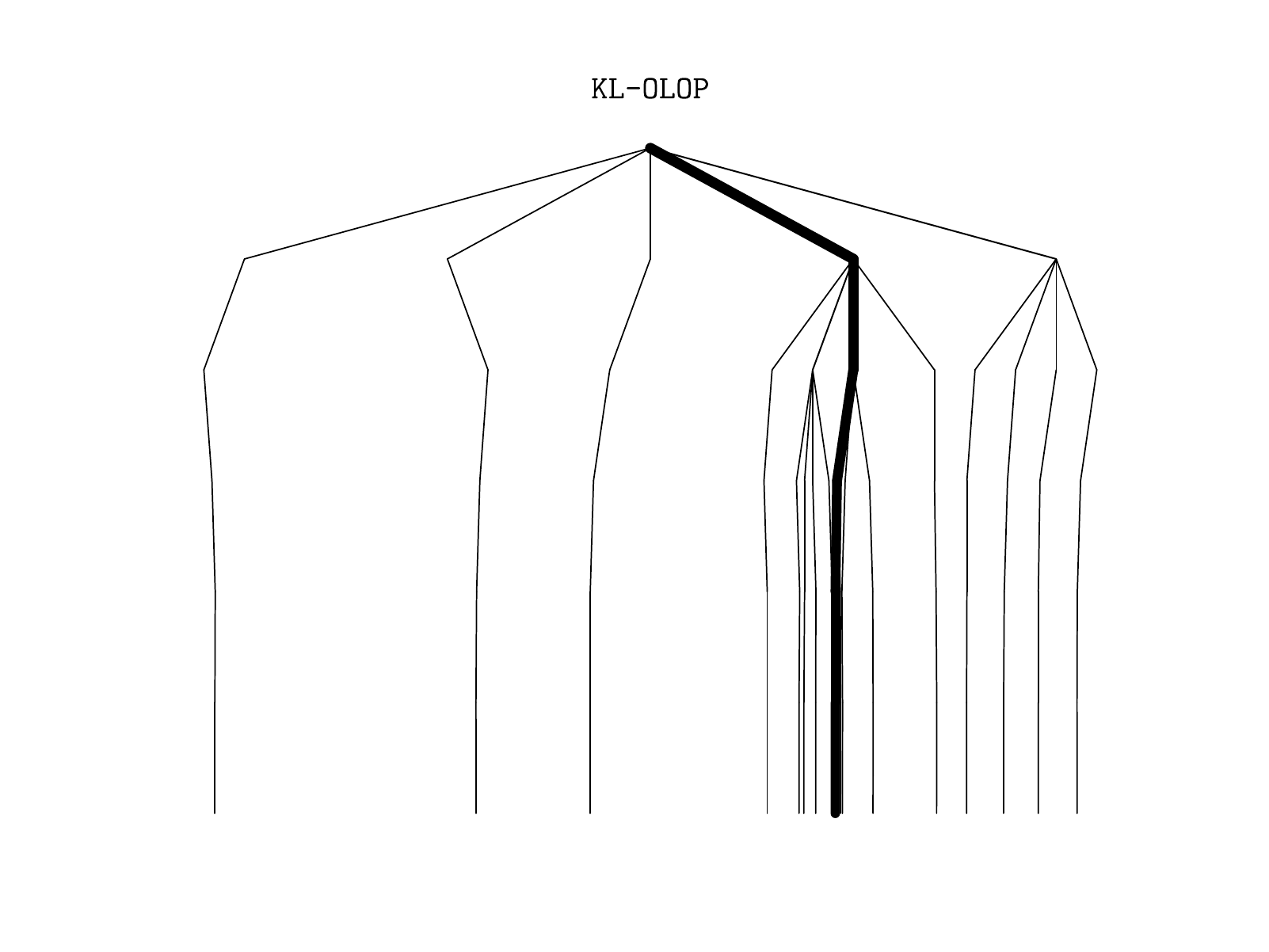}

    \caption{The look-ahead trees expanded by different planning algorithms for a budget $n=10^3$, shown down to depth 6. The width of edges is proportional to the nodes visit count $T_a(M)$.}
    \label{fig:trees}
\end{figure}

\section{Supplementary proofs of Theorem \ref{thm:regret}}

\subsection{Lemma \ref{lemma:expected-regret}}
\begin{proof}
The proof is identical to that of Lemma 4 in \citep{Bubeck2010}.
Since $\arg \max _{a \in A} T_{a}(M)$, and $\sum_{a \in A} T_{a}(M)=M,$ we have $T_{a(n)}(M) \geq M / K$, and thus:
\begin{equation*}
\frac{M}{K}(V-V(a(n))) \leq(V-V(a(n))) T_{a(n)}(M) \leq \sum_{m=1}^{M} V-V\left(a^{m}\right)
\end{equation*}
Hence, we have, $r_{n} \leq \frac{K}{M} \sum_{m=1}^{M} V-V\left(a^{m}\right)$. Now remark that, for any sequence of actions $a\in A^L$, we have either:
\begin{itemize}
    \item $a_{1 : H} \in \mathcal{I}_{H} ;$ which implies $V-V(a) \leq \frac{2 \gamma^{H+1}}{1-\gamma}$
    \item or there exists $1\leq h \leq H$ such that $a_{1:h} \in \mathcal{J}_h$; which implies $V-V(a) \leq V-V\left(a_{1 : h-1}\right)+\frac{\gamma^{h}}{1-\gamma} \leq \frac{3 \gamma^{h}}{1-\gamma}$.
\end{itemize}
Thus we can write:
\begin{equation*}
\begin{aligned} \sum_{m=1}^{M}\left(V-V\left(a^{m}\right)\right) &=\sum_{m=1}^{M}\left(V-V\left(a^{m}\right)\right)\left(\mathbbm{1}\left\{a^{m} \in \mathcal{I}_{H}\right\}+\mathbbm{1}\left\{\exists 1 \leq h \leq H : a_{1 : h}^{m} \in \mathcal{J}_{h}\right\}\right) \\ & \leq \frac{2 \gamma^{H+1}}{1-\gamma} M+3 \sum_{h=1}^{H} \sum_{a \in \mathcal{J}_{h}} \frac{\gamma^{h}}{1-\gamma} T_{a}(M) \end{aligned}
\end{equation*}
\qed
\end{proof}

\subsection{Lemma \ref{lemma:size_Ph}}
\begin{proof}
The event $\tau^a_{h,h'}=1$ implies $a^{m+1}\in a A^*$ and \eqref{eq:sampled-enough}. This implies by Lemma \ref{lemma:sub-optimal-pull} that either \eqref{eq:cond-ukl}, \eqref{eq:cond-lkl} or \eqref{eq:cond-dkl} is satisfied. Now by Lemma \ref{lemma:ci-length} this implies that either \eqref{eq:cond-ukl} is true or \eqref{eq:cond-lkl} is true or \eqref{eq:sampled-enough-h} is false. We now prove that if \eqref{eq:P-min-size} is not satisfied then \eqref{eq:sampled-enough} is true, which clearly ends the proof.
This follows from: For any $0 \leq t \leq h'$:

\begin{align*}
    T_{a_{1:t}}(m) &= \sum_{b\in a_{1:t}A^{h-t}} T_b(m) \geq \sum_{b\in \mathcal{P}^{a_{1:t}}_{h,h'}} T_b(m) \\
    &\geq \left(\gamma^{2(t-h')}\right) \left(2f(m)(h+1)^2\gamma^{2(h'-h-1)}\right)\\
    &= 2f(m)(h+1)^2\gamma^{2(t-h-1)}\,.\qquad\qquad\qed
\end{align*}
\end{proof}

\subsection{Lemma \ref{lemma:expected-P-size}}

\begin{proof}
The proof is identical to that of Lemma 9 in \citep{Bubeck2010}.

\noindent
Let $h'\geq 1$ and $0 \leq s \leq h'$. We introduce the following random variables:
\begin{equation*}
m_{s}^{a}=\min \left(M, \min \left\{m \geq 0 :\left|\mathcal{P}_{h, h^{\prime}}^{a}(m)\right| \geq \gamma^{2\left(s-h^{\prime}\right)}\right\}\right).
\end{equation*}
We will prove recursively that,
\begin{equation}
\label{eq:toprove}
\left|\mathcal{P}_{h, h^{\prime}}^{\emptyset}(m)\right| \leq \sum_{t=0}^{s} \gamma^{2\left(t-h^{\prime}\right)}\left|\mathcal{I}_{t}\right|+\sum_{a \in \mathcal{I}_{s}}\left|\mathcal{P}_{h, h^{\prime}}^{a} \setminus \cup_{t=0}^{s} \mathcal{P}_{h, h^{\prime}}^{a_{1:t}}\left(m_{t}^{a_{1:t}}\right)\right|
\end{equation}
The result is true for $s = 0$ since $\mathcal{I}_0 = \{\emptyset\}$ and by definition of $m^\emptyset_0$,
\begin{equation*}
\left|\mathcal{P}_{h, h^{\prime}}^{\emptyset}(m)\right| \leq \gamma^{-2 h^{\prime}}+\left|\mathcal{P}_{h, h^{\prime}}^{\emptyset}(m) \setminus \mathcal{P}_{h, h^{\prime}}^{\emptyset}\left(m_{0}^{\emptyset}\right)\right|
\end{equation*}
Now let us assume that the result is true for $s<h'$. We have:
\begin{align*}
\sum_{a \in \mathcal{I}_{s}}\left|\mathcal{P}_{h, h^{\prime}}^{a}(m) \setminus \cup_{h, h^{\prime}}^{a_{1 : t}}\left(m_{t}^{a_{1 : t}}\right)\right|&=\sum_{a \in \mathcal{I}_{s+1}}\left|\mathcal{P}_{h, h^{\prime}}^{a}(m) \setminus \cup_{t=0}^{s} \mathcal{P}_{h, h^{\prime}}^{a_{1 : t}}\left(m_{t}^{a_{1 : t}}\right)\right|\\
&\leq \sum_{a \in \mathcal{I}_{s+1}} \gamma^{2\left(s+1-h^{\prime}\right)}+\left|\mathcal{P}_{h, h^{\prime}}^{a}(m) \setminus \cup_{t=0}^{s+1} \mathcal{P}_{h, h^{\prime}}^{a_{1 : t}}\left(m_{t}^{a_{1 : t}}\right)\right|\\
&= \gamma^{2\left(s+1-h^{\prime}\right)}\left|\mathcal{I}_{s+1}\right|+\sum_{a \in \mathcal{I}_{s+1}}\left|\mathcal{P}_{h, h^{\prime}}^{a}(m) \setminus \cup_{t=0}^{s+1} \mathcal{P}_{h, h^{\prime}}^{a_{1 ; t}}\left(m_{t}^{a_{1 : t}}\right)\right|
\end{align*}
which ends the proof of \eqref{eq:toprove}. Thus we proved (by taking $s=h'$ and $m=M$):
\begin{equation*}
\begin{aligned}\left|\mathcal{P}_{h, h^{\prime}}^{\emptyset}(M)\right| & \leq \sum_{t=0}^{h^{\prime}} \gamma^{2\left(t-h^{\prime}\right)}\left|\mathcal{I}_{t}\right|+\sum_{a \in \mathcal{I}_{h^{\prime}}}\left|\mathcal{P}_{h, h^{\prime}}^{a}(M) \setminus \cup_{t=0}^{s+1} \mathcal{P}_{h, h^{\prime}}^{a_{1 : t}}\left(m_{t}^{a_{1 : t}}\right)\right.\\ &=\sum_{t=0}^{h^{\prime}} \gamma^{2\left(t-h^{\prime}\right)}\left|\mathcal{I}_{t}\right|+\sum_{a \in \mathcal{J}_{h}}\left|\mathcal{P}_{h, h^{\prime}}^{a}(M) \setminus \cup_{h, h^{\prime}}^{a_{1 : t}}\left(m_{t}^{a_{1 : t}}\right)\right| \end{aligned}
\end{equation*}

Now, for any $a\in \mathcal{J}_h$, let $\tilde{m} = \max_{0\leq t\leq h'} m_t^{a_{1:t}}$. Note that for $m\geq \tilde{m}$, equation \eqref{eq:P-min-size} is not satisfied. Thus we have
\begin{equation*}
\begin{aligned}
\left|\mathcal{P}_{h, h^{\prime}}^{a} \setminus \cup_{h, h^{\prime}}^{s+1} \mathcal{P}_{h, h^{\prime}}^{a_{1 : t}}\left(m_{t}^{a_{1 : t}}\right)\right|=&\sum_{m=\tilde{m}}^{M-1} \tau_{h, h^{\prime}}^{a}(m+1)=\sum_{m=0}^{M-1} \tau_{h, h^{\prime}}^{a}(m+1) \mathbbm{1}\{\eqref{eq:P-min-size} \text { is not satisfied }\} \\ & \leq \sum_{m=0}^{M-1} \tau_{h, h^{\prime}}^{a}(m+1) \mathbbm{1}\{\eqref{eq:cond-ukl} \text { or }\eqref{eq:cond-lkl}\} \end{aligned}
\end{equation*}
where the last inequality results from Lemma \ref{lemma:size_Ph}. Hence, we proved:

\begin{equation*}
\left|\mathcal{P}_{h, h^{\prime}}^{\emptyset}\right| \leq \sum_{t=0}^{h^{\prime}} \gamma^{2\left(t-h^{\prime}\right)}\left|\mathcal{I}_{t}\right|+\sum_{m=0}^{M-1} \sum_{a \in \mathcal{J}_{h}} \mathbbm{1}\{\eqref{eq:cond-ukl}\text{ or }\eqref{eq:cond-lkl}\}
\end{equation*}

Taking the expectation and applying Lemma \ref{lemma:boundary-crossing-prob} yield the claimed bound for $h'\geq 1$.

Now for $h' = 0$ we need a modified version of Lemma \ref{lemma:size_Ph}. Indeed in this case one can directly prove that $\tau_{h,0}^a(m+1)=1$ implies that either equation \eqref{eq:cond-ukl} or \eqref{eq:cond-lkl} is satisfied (this follows from the fact that $\tau_{h,0}^a(m+1)=1$ always imply that \eqref{eq:sampled-enough-h} is true for $h'= 0$). Thus we obtain:
\begin{equation*}
\left|\mathcal{P}_{h, h^{\prime}}^{\emptyset}\right|=\sum_{m=0}^{M-1} \sum_{a \in \mathcal{J}_{h}} \tau_{h, 0}^{a}(m+1) \leq \sum_{m=0}^{M-1} \sum_{a \in \mathcal{J}_{h}} \mathbbm{1}\{\eqref{eq:cond-ukl}\text{ or }\eqref{eq:cond-lkl}\}
\end{equation*}
Taking the expectation and applying Lemma \ref{lemma:boundary-crossing-prob} yield the claimed bound for $h' = 0$ and ends the proof.
\qed
\end{proof}

\subsection{Lemma \ref{lemma:expected-plays-count}}

\begin{proof}
The proof is identical to that of Lemma 10 in \citep{Bubeck2010}:
\begin{align*}
\sum_{a \in \mathcal{J}_{h}} T_{a}(M) &= \sum_{a \in \mathcal{J}_{h} \backslash \mathcal{P}_{h, h-1}^{\emptyset}} T_{a}(M)+\sum_{h^{\prime}=1}^{h-1} \sum_{a \in \mathcal{P}_{h, h^{\prime}}^{\emptyset} \setminus \mathcal{P}_{h, h^{\prime}-1}^{\emptyset}} T_{a}(M)+\sum_{a \in \mathcal{P}_{h, 0}^{\emptyset}} T_{a}(M)\\
&\leq 2f(m)(h+1)^{2} \gamma^{2(h-2-h)}\left|\mathcal{J}_{h}\right|+\sum_{h^{\prime}=1}^{h-1} 2f(m)(h+1)^{2} \gamma^{2\left(h^{\prime}-2-h\right)} \log M\left|\mathcal{P}_{h, h^{\prime}}^{\emptyset}\right|+M\left|\mathcal{P}_{h, 0}^{\emptyset}\right|\\
&=\tilde{O}\left(\left(\kappa^{\prime}\right)^{h}+\gamma^{-2 h} \sum_{h^{\prime}=1}^{h-1} \gamma^{2 h^{\prime}}\left|\mathcal{P}_{h, h^{\prime}}^{\emptyset}\right|+M\left|\mathcal{P}_{h, 0}^{\emptyset}\right|\right)
\end{align*}
Taking the expectation and applying the bound of Lemma \ref{lemma:expected-P-size} give the claimed bound.
\qed
\end{proof}

\section{Proof of Theorem \ref{thm:consistency}}

\begin{proof}

To prove consistency of Algorithm \ref{algo:lazy-kl-olop}, we need to show that the sequences of actions $a^m$ sampled at every episode are chosen arbitrarily from the same sets as in Algorithm \ref{algo:lazy-kl-olop}.
Namely, 

\begin{equation*}
    \left\{ b\in a A^{L-|a|} : a \in \argmax_{a\in \LL_{m-1}^+} B_a(m-1)\right\} = \argmax_{a\in A^L} B_a(m-1)
\end{equation*}

To that end, we first introduce some useful notations:

\begin{paragraph}{Definition}
Let $\Tau_m$ be the set of visited nodes after episode $m$:
\begin{equation*}
    \Tau_m \eqdef \left\{a\in A^*: T_a(m) > 0\right\}
\end{equation*}

We also define its extension $\Tau_m^+$ of visited nodes and their children:
\begin{equation*}
    \Tau_m^+ \eqdef \Tau_m + \Tau_m A
\end{equation*}

Now for $a\in A^*$, $\pi_m(a)$ (resp. $\pi_m^+(a)$) refers to its longest prefix within $\Tau_m$ (resp. $\Tau_m^+$):
\begin{align*}
    \pi_m(a) &\eqdef \argmax_{b\in\Tau_m} \{|b|: a\in b A^* \} \\
    \pi_m^+(a) &\eqdef \argmax_{b\in\Tau_m^+} \{|b|: a\in b A^* \}
\end{align*}

Finally, $\LL_m$ and $\LL_m^+$ are the image of $A^L$ by $\pi_m$ and $\pi_m^+$, respectively.
\begin{align*}
    \LL_m &\eqdef \pi_m(A^L) \\
    \LL_m^+ &\eqdef \pi_m^+(A^L) \}
\end{align*}
\end{paragraph}

\begin{remark}[About children extensions]
We could frame Algorithm \ref{algo:lazy-kl-olop} in terms of $\Tau_m$ and $\LL_m$, for which mathematical proofs are more straight-forward. However, the iterative construction of $\LL_m$ is tricky and it would require inverting $\pi_m$ on $\LL_m$ which is non-trivial. On the contrary, introducing their extensions  $\Tau_m^+$ and $\LL_m^+$ slightly complicates the proof, but greatly simplifies the construction of $\LL_m^+$ and the computation of ${\pi_m^+}^{-1}$ on $\LL_m^+$, which is why we use these sets in practice.
\end{remark}

\begin{lemma}[Sets construction]
$\Tau_m^+$ and $\LL_m^+$ are indeed the sets computed in Algorithm  \ref{algo:lazy-kl-olop}.
\end{lemma}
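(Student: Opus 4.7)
The plan is to proceed by induction on the episode index $m \geq 0$, simultaneously establishing both identities $\Tau_m^+ = \Tau_m \cup \Tau_m A$ (truncated at depth $L$) and $\LL_m^+ = \pi_m^+(A^L)$. The base case $m=0$ reduces to the initialization line $\Tau_0^+ = \LL_0^+ = \{\emptyset\}$, under the natural convention that the root belongs to $\Tau_0$ (no non-root nodes have been visited, no children extensions exist).

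For the inductive step, fix an episode $m$ with sampled sequence $a^m$. The first task is to show that the algorithm correctly maintains $\Tau_m^+$. From the definition of $\Tau_m$, the newly visited nodes after episode $m$ are precisely the prefixes $a^m_{1:t}$ that did not lie in $\Tau_{m-1}$, and each such new visit contributes the sibling block $a^m_{1:t-1}A$ to the set-theoretic extension $\Tau_m \cup \Tau_m A$. To match this with the algorithmic test $a^m_{1:t} \not\in \Tau_m^+$, the plan is to carry a side invariant: $\Tau_m^+$ contains its non-root nodes in complete sibling blocks, that is, for any node $b$, if $b A \cap \Tau_m^+ \neq \emptyset$ then $bA \subseteq \Tau_m^+$. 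This invariant is preserved because additions in the algorithm are always performed in full blocks, and it implies the equivalence $a^m_{1:t} \in \Tau_m^+ \iff a^m_{1:t-1}A \subseteq \Tau_m^+$, so the algorithm's test triggers exactly on the iterations where a sibling block is still missing from the definitional $\Tau_m^+$.

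The second task is to verify the identity for $\LL_m^+$. The key observation is that adjoining a sibling block $a^m_{1:t-1}A$ to $\Tau_m^+$ turns $a^m_{1:t-1}$ from a leaf of $\pi_m^+(A^L)$ into an internal node (its children now lie in $\Tau_m^+$) and turns each element of $a^m_{1:t-1}A$ into a new leaf (they have no strict extensions in the current $\Tau_m^+$, modulo the iterations of the inner loop that follow, which will correctly unleaf them in turn). The two-line update in Algorithm \ref{algo:lazy-kl-olop} performs exactly these changes.

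The main obstacle is the sibling-block side invariant: without it, the single test $a^m_{1:t} \not\in \Tau_m^+$ cannot be equated with the absence of the whole block from $\Tau_m^+$, and one might worry about partially-added sibling sets. Once this invariant is carried through the inner loop, the rest of the argument is routine set-theoretic bookkeeping. The edge case $t = L$ needs a brief check: depth-$(L{+}1)$ nodes are never introduced by either the definitional extension (truncated at $L$) or the algorithm (the loop stops at $t = L$), so the two descriptions remain consistent at the deepest layer.
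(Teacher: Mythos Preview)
Your proposal is correct and follows essentially the same inductive skeleton as the paper's proof: both argue by induction on $m$, establish a recursion for $\Tau_m^+$, and identify $\LL_m^+$ with the leaves of $\Tau_m^+$ so that the add-children/remove-parent update is the right one. The one noteworthy difference is in how the conditional test in the inner loop is handled. The paper works from the definitions to the closed-form recursion $\Tau_{m+1}^+ = \Tau_m^+ \cup \bigcup_{t} a^{m+1}_{1:t}A$ by pure set algebra and then asserts in one line that this is what the algorithm implements, leaving the reader to see that the guard $a^m_{1:t}\notin\Tau_m^+$ merely skips redundant additions. You instead make this explicit through the sibling-block invariant, which cleanly justifies why the single-element membership test is equivalent to the whole block already being present; this is a slightly more operational and more careful treatment of the same step. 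Your explicit mention of the depth-$L$ truncation is also a detail the paper glosses over (its derived recursion formally includes $a^{m+1}_{1:L}A$, which the algorithm never adds). Neither difference changes the substance of the argument.
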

\begin{proof}
Note that for each episode $1 \leq m \leq M - 1$, we have:
\begin{equation}
\label{eq:tau_mp1}
    \Tau_{m+1} = \Tau_{m} + \sum_{t=0}^L a^{m+1}_{1:t}
\end{equation}
Indeed, the nodes visited at least once at time $m+1$ where either already visited once at time $m$ (e.g. in $\Tau_{m}$) or have been visited for the first time during episode $m+1$, which means they are a prefix of $a^{m+1}$. The reverse is clearly true as well.

This enables to write:
\begin{align*}
    \Tau_{m+1}^+ &= \Tau_{m+1} + \Tau_{m+1}A & \text{by definition}\\
    &= \Tau_{m} + \sum_{t=0}^L a^{m+1}_{1:t} + (\Tau_{m} + \sum_{t=0}^L a^{m+1}_{1:t})A & \text{by \eqref{eq:tau_mp1}} & \\
    &= (\Tau_{m} + \Tau_{m}A) + \sum_{t=0}^L a^{m+1}_{1:t} + \sum_{t=0}^L a^{m+1}_{1:t}A & \\
    &= \Tau_{m}^+ + a^{m+1}_{1:0} + \sum_{t=0}^L a^{m+1}_{1:t}A &\text{ as } \sum_{t=1}^L a^{m+1}_{1:t} \subset \sum_{t=0}^L a^{m+1}_{1:t}A\\
    &=  \Tau_{m}^+ + \sum_{t=0}^L a^{m+1}_{1:t}A  &\text{ as }a^{m+1}_{1:0}=\emptyset \in \Tau_{0} \subset \Tau_{m} \subset \Tau_m^+
\end{align*}
This recursion is the one implemented in Algorithm \ref{algo:lazy-kl-olop}: at each episode $m$, we add to $\Tau_{m}^+$ the children of the nodes along the sampled action sequence $a^{m}$.

Finally, we highlight that $\LL_m^+ = \pi^+(A^L)$ is the set of leaves of $\Tau_{m}^+$.
Indeed, nodes of $\LL_m^+$ belong to $\Tau_{m}^+$, but they cannot have a child in $\Tau_{m}^+$ as it would contradict the definition of $\LL_m^+$. Conversely, any leaf $a$ of $\Tau_{m}^+$ can be continued arbitrarily to a sequence $b$ of $A^L$, which  $a = \pi_m^+(b) \in \pi^+(A^L) = \LL_m^+$.

Thus, when updating $\Tau_{m-1}^+$, the set of its leaves is updated accordingly: when the children of a leaf $a^m_{1:t-1}$ are added to $\Tau_m^+$, they become new leaves in place of their parent. Hence, they are added to $\LL_m^+$ while $a^m_{1:t-1}$ is removed from it.
\qed
\end{proof}

\begin{lemma}[U-values conservation]
\label{lemma:value-conservation}
For all $a \in A^*$,
\begin{equation*}
    U_a(m) = U_{\pi_m(a)}(m) = U_{\pi_m^+(a)}(m)
\end{equation*}
\end{lemma}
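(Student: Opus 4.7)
The crux of the lemma is that in \KLOLOP every not-yet-visited node carries the maximal confidence upper bound $U^{\mu}_b(m) = 1$. Indeed, when $T_b(m) = 0$ the constraint $T_b(m)\,\dber(\hat{\mu}_b(m), q) \leq f(m)$ reduces to $0 \leq f(m)$, which is satisfied for every $q \in I = [0,1]$, so the maximum $q = 1$ is attained. I would open the proof by recording this elementary observation, and then split on the position of $a$ relative to $\Tau_m$: if $a \in \Tau_m$, then $\pi_m(a) = \pi_m^+(a) = a$ and both equalities are immediate.

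Otherwise let $t_0 = |\pi_m(a)| < |a|$; by definition of $\pi_m$, all prefixes $a_{1:t}$ with $t_0 < t \leq |a|$ lie outside $\Tau_m$, so by the preliminary observation $U^{\mu}_{a_{1:t}}(m) = 1$ on that range. Plugging into \eqref{eq:Ua} and isolating the tail gives
\begin{equation*}
U_a(m) = \sum_{t=1}^{t_0} \gamma^t U^{\mu}_{a_{1:t}}(m) + \sum_{t=t_0+1}^{|a|} \gamma^t + \frac{\gamma^{|a|+1}}{1-\gamma},
\end{equation*}
and the geometric identity $\sum_{t=t_0+1}^{|a|} \gamma^t + \frac{\gamma^{|a|+1}}{1-\gamma} = \frac{\gamma^{t_0+1}}{1-\gamma}$ collapses the right-hand side to exactly $U_{\pi_m(a)}(m)$, yielding the first equality.

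For the second equality, note that $\pi_m^+(a)$ is either $\pi_m(a)$ itself (when $a = \pi_m(a)$, already handled) or the single-step extension $a_{1:t_0+1} \in \Tau_m A \setminus \Tau_m$. In the latter case $U^{\mu}_{a_{1:t_0+1}}(m) = 1$ by the same argument, and writing out $U_{\pi_m^+(a)}(m)$ introduces the two extra terms $\gamma^{t_0+1}\cdot 1 + \frac{\gamma^{t_0+2}}{1-\gamma} = \frac{\gamma^{t_0+1}}{1-\gamma}$, which again equals the tail of $U_{\pi_m(a)}(m)$; so $U_{\pi_m^+(a)}(m) = U_{\pi_m(a)}(m)$. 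I do not expect a genuine obstacle: the argument is a one-line telescoping of a geometric tail. The only subtlety worth flagging is that the lemma as stated relies on the \KLOLOP-specific fact $\sup I = 1$, whereas in \OLOP unvisited nodes satisfy $U^{\mu}_b(m) = +\infty$; since Algorithm \ref{algo:lazy-kl-olop} is consistent with \KLOLOP where $B_a(m) = U_a(m)$ by Remark \ref{rmk:sharpen}, this suffices for the proof of Theorem \ref{thm:consistency}, and the analogous \OLOP statement would be phrased directly in terms of $B$-values via the sharpening step of Algorithm \ref{algo:kl-olop}.
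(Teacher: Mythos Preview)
Your proposal is correct and follows essentially the same route as the paper's proof: both arguments rest on the fact that unvisited nodes have $U^{\mu}_b(m)=1$ in \KLOLOP, then collapse the geometric tail beyond $\pi_m(a)$, and finally handle $\pi_m^+(a)$ by the one-step extension into $\Tau_m A$. The only cosmetic difference is that you split explicitly on $a\in\Tau_m$ versus $a\notin\Tau_m$, whereas the paper treats both at once by allowing $h'=|\pi_m(a)|\leq h=|a|$ (with an empty tail sum when $h'=h$); your added remark that the lemma is \KLOLOP-specific because $\sup I=1$ is a correct and useful observation that the paper leaves implicit.
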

\begin{proof}
Let $a \in A^*$, denote $h=|a|$ and $h'=|\pi_m(a)|$.

By definition of $\pi_m(a)$, $0 \leq h' \leq h$, and
\begin{itemize}
    \item for $1\leq t \leq h'$, we have $a_{1:t} = {\pi_m(a)}_{1:t}$ ;
    \item for $h'+1\leq t \leq h$, we have $a_{1:t} \not \in \Tau_m$, hence $T_{a_{1:t}}(m) = 0$ and $U^{\mu}_{a_{1:t}}(m) = 1$.
\end{itemize}
Then,
\begin{align*}
    U_a(m) &= \sum_{t=1}^h \gamma^t U^{\mu}_{a_{1:t}}(m) + \frac{\gamma^{h+1}}{1-\gamma} \\
    &= \sum_{t=1}^{h'} \gamma^t U^{\mu}_{a_{1:t}}(m) + \sum_{t=h'+1}^h \gamma^t \underbrace{U^{\mu}_{a_{1:t}}(m)}_1 + \frac{\gamma^{h+1}}{1-\gamma} \\
    &= \sum_{t=1}^{h'} \gamma^t U^{\mu}_{{\pi_m(a)}_{1:t}}(m) + \frac{\gamma^{h'+1}}{1-\gamma} \\
    &= U_{\pi_m(a)}(m)
\end{align*}

Now, consider $\pi_m^+(a) \in \Tau_m^+$.
By definition, it belongs either to $\Tau_m$ or $\Tau_m A$.
\begin{itemize}
    \item If $\pi_m^+(a) \in \Tau_m$, then $\pi_m^+(a) = \pi_m(a)$ and $U_{\pi_m^+(a)}(m) = U_{\pi_m(a)}(m)$.
    \item Else, $\pi_m^+(a) \in \Tau_m A$ and $p(\pi_m^+(a)) = \pi_m(a)$.
    
    As $\pi_m^+(a) \not \in \Tau_m$, we have $T_{\pi_m^+(a)}(m) = 0$ and $U^{\mu}_{\pi_m^+(a)}(m) = 1$.
    This yields:
    
    \begin{equation*}
        U_{\pi_m^+(a)}(m) = \sum_{t=1}^{h'} \gamma^t U^{\mu}_{{\pi_m^+(a)}_{1:t}}(m) + \gamma^{h'+1} \underbrace{U^{\mu}_{{\pi_m^+(a)}}(m)}_1 + \frac{\gamma^{h'+2}}{1-\gamma} = U_{\pi_m(a)}(m)
    \end{equation*}
    
\end{itemize}
We showed that $U_{\pi_m^+(a)}(m) = U_{\pi_m(a)}(m)$, which concludes the proof.
\qed
\end{proof}

\begin{lemma}[Inverse projection]
\label{lemma:inverse-proj}
For all $a\in \LL_m^+$ of length $h\leq L$,
\begin{equation*}
    {\pi_m^+}^{-1}(a) = a A^{L-h}
\end{equation*}

This allows to easily pick a sequence inside ${\pi_m^+}^{-1}(a)$: just continue the sequence $a$ with a default action of $A$ (e.g. the first) until it reaches length $L$.
\end{lemma}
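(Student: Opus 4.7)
The plan is to prove the claimed set equality by double inclusion, leaning on the key structural fact already established in the \textbf{Sets construction} lemma: $\LL_m^+$ is exactly the set of leaves of $\Tau_m^+$.

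First, for the inclusion ${\pi_m^+}^{-1}(a) \subseteq aA^{L-h}$: any $b \in {\pi_m^+}^{-1}(a)$ satisfies $b \in A^L$ and $\pi_m^+(b) = a$. By the definition of $\pi_m^+$ as a longest prefix of $b$ lying in $\Tau_m^+$, $a$ must be a prefix of $b$ of length $h \leq L$, whence $b \in aA^{L-h}$.

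For the reverse inclusion $aA^{L-h} \subseteq {\pi_m^+}^{-1}(a)$: take any $b \in aA^{L-h}$. Clearly $a$ is a prefix of $b$ and $a \in \LL_m^+ \subseteq \Tau_m^+$, so $|\pi_m^+(b)| \geq h$. To conclude $\pi_m^+(b) = a$ it suffices to show that no strict extension of $a$ that is also a prefix of $b$ belongs to $\Tau_m^+$. Here I invoke the fact, proven in the \textbf{Sets construction} lemma, that $a \in \LL_m^+$ implies $a$ is a leaf of $\Tau_m^+$, i.e.\ $aA \cap \Tau_m^+ = \emptyset$. Consequently none of $b_{1:t}$ for $h < t \leq L$ can be in $\Tau_m^+$ (otherwise $b_{1:h+1} = a b_{h+1} \in \Tau_m^+$ would follow from the fact that $\Tau_m^+$ is prefix-closed, which itself follows directly from the construction $\Tau_m^+ = \Tau_m + \Tau_m A$ and the prefix-closedness of $\Tau_m$). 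Hence $\pi_m^+(b) = a$ and $b \in {\pi_m^+}^{-1}(a)$.

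The argument is essentially a bookkeeping exercise once the leaf characterization of $\LL_m^+$ is available, so the main (minor) obstacle is stating cleanly that $\Tau_m^+$ is prefix-closed, which I would justify by a one-line induction on $m$ using the recursion $\Tau_{m+1}^+ = \Tau_m^+ + \sum_{t=0}^L a^{m+1}_{1:t} A$ derived in the \textbf{Sets construction} lemma. The final remark that one may choose any completion of $a$ by $L-h$ arbitrary actions then follows immediately from ${\pi_m^+}^{-1}(a) = aA^{L-h}$, justifying the arbitrary continuation step in Algorithm~\ref{algo:lazy-kl-olop}.
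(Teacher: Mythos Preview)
Your proof is correct and a bit more streamlined than the paper's. Both arguments treat the inclusion ${\pi_m^+}^{-1}(a)\subseteq aA^{L-h}$ identically, but they diverge on the reverse inclusion. You invoke directly the leaf characterization of $\LL_m^+$ established in the Sets construction lemma, together with prefix-closedness of $\Tau_m^+$, to rule out any strictly longer prefix of $b$ lying in $\Tau_m^+$. The paper instead splits into the cases $h=L$ and $h<L$; in the latter it first argues by contradiction that $a\notin\Tau_m$ (using a witness $c\in A^L$ with $\pi_m^+(c)=a$), and then uses visit counts to show $T_{b_{1:t}}(m)\leq T_a(m)=0$ for all $t\geq h$, whence $b_{1:t}\notin\Tau_m$ and therefore $b_{1:t}\notin\Tau_m^+$ for $t>h$. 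Your route is shorter because it reuses the leaf property already proved upstream; the paper's route is more self-contained and makes the dependence on the visitation structure $\Tau_m$ explicit. As a minor remark, prefix-closedness of $\Tau_m^+$ follows immediately from the static definition $\Tau_m^+=\Tau_m\cup\Tau_mA$ and the evident prefix-closedness of $\Tau_m$; the induction on $m$ via the recursion is unnecessary.
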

\begin{proof}
Let $a\in \LL_m^+$. 

By definition of $\pi_m^+$, any sequence in ${\pi_m^+}^{-1}(a)$ is a suffix of $a$ of length $L$, so we clearly have the direct inclusion ${\pi_m^+}^{-1}(a) \subset a A^{L-h}$.

Now for the other side: let $b\in a A^{L-h}$, i.e. $a=b_{1:h}$. We need to show that $\pi_m^+(b) = a$.
As $a\in \LL_m^+$, there exists $c\in A^L$ such that $\pi_m^+(c) = a$.
\begin{itemize}
    \item If h = L, then $b=a$, so $b \in \LL_m^+ \subset \Tau_m^+$, and hence $\pi_m^+(b)=b=a$.
    \item If h < L, we can show by contradiction that $a \not \in \Tau_m$. Indeed, if $a \in \Tau_m$, then $c_{1:h+1}$ is the child of a node of $\Tau_m$ and hence belongs to $\Tau_m^+$. But then, $c_{1:h+1}$ is a prefix of $c$ in $\Tau_m^+$ with greater length than $a$, which contradicts the definition of $a = \pi_m^+(c)$.
    
    Now, because $a \not \in \Tau_m$, it is also true for all suffixes of $a$, and in particular for $b_{1:t}$ with $h \leq t \leq L$. Indeed, we have $a^s_{1:t} = b_{1:t} \implies a^s_{1:h} = b_{1:h} = a$, so:
    \begin{equation*}
        T_{b_{1:t}}(m) = \sum_{s=1}^m \mathbbm{1}\{a^s_{1:t} = b_{1:t}\} \leq \sum_{s=1}^m \mathbbm{1}\{a^s_{1:h} = a\} = T_a(m) = 0
    \end{equation*}
    Hence, $b_{1:t} \not \in \Tau_m$ for all $h \leq t \leq L$, so in particular $b_{1:t} \not \in \Tau_m^+$ for all $h+1 \leq t \leq L$. Since $b_{1:h} = a \in \Tau_m^+$, $a$ is indeed the longest prefix of $b$ in $\Tau_m^+$, that is: $\pi_m^+(b) = a$.
\end{itemize}
We have shown the other side of the inclusion: $a A^{L-h} \subset {\pi_m^+}^{-1}(a)$, which entails that the two sets are in fact equal.
\qed
\end{proof}

We can now conclude our proof of Theorem \ref{thm:consistency}: at episode $m$, \KLOLOP samples a sequence of action $a^m$ within the set $\argmax_{a\in A^L} U_a(m)$. 
However, we have:

\begin{align*}
    \argmax_{c\in A^L} U_c(m) &= \argmax_{c\in A^L} U_{\pi_m^+(c)}(m) & \text{by Lemma \ref{lemma:value-conservation}} \\
    &= {\pi_m^+}^{-1}\left(\argmax_{a\in {\pi_m^+}(A^L)} U_{a}(m)\right) & \\
    &= \left\{b\in{\pi_m^+}^{-1}(a) : a\in \argmax_{a\in \LL_m^+} U_{a}(m)\right\} & \\
    &= \left\{b\in a A^{L-|a|} : a\in \argmax_{a\in \LL_m^+} U_{a}(m)\right\} & \text{by Lemma \ref{lemma:inverse-proj}} 
\end{align*}

Thus, at each episode the sequence of actions $a^m$ sampled by Algorithm \ref{algo:lazy-kl-olop} could have been sampled by Algorithm \ref{algo:kl-olop} as well.

In particular, if the arbitrary rule used to pick a sequence from a set is the same for the two algorithms, then the sampled sequences $a^m$ will be identical, will have the same visit count $T_{a^m}(m)$, and in the end the returned action $a(n)$ will be the same.
\qed
\end{proof}
\end{document}